\newtheorem{thm}{Theorem}
\newtheorem{lem}{Lemma}
\newtheorem{assum}{Assumption}
\newtheorem{proposition}{Proposition}
\newtheorem{remark}{Remark}
\newcommand{\argmax}{\operatornamewithlimits{argmax}}
\title{Multiscale Non-stationary Stochastic Bandits}
\author{
Qin Ding$^1$
\and
Cho-Jui Hsieh$^2$\and
James Sharpnack$^{1}$
\affiliations
$^1$ Department of Statistics, University of California, Davis \\
$^2$ Department of Computer Science, University of California, Los Angeles 
\emails
qding@ucdavis.edu,
chohsieh@cs.ucla.edu,
jsharpna@ucdavis.edu.
}
\begin{document}

\maketitle

\begin{abstract}
  Classic contextual bandit algorithms for linear models, such as LinUCB, assume that the reward distribution for an arm is modeled by a stationary linear regression. When the linear regression model is non-stationary over time, the regret of LinUCB can scale linearly with time. In this paper, we propose a novel multiscale changepoint detection method for the non-stationary linear bandit problems, called Multiscale-LinUCB, which actively adapts to the changing environment. We also provide theoretical analysis of regret bound for Multiscale-LinUCB algorithm. Experimental results show that our proposed Multiscale-LinUCB algorithm outperforms other state-of-the-art algorithms in non-stationary contextual environments.
\end{abstract}

\section{Introduction}

The multi-armed bandit (MAB) problem is a sequential learning setting, where each round the player decides which arm to pull from a $K$-arm bandit. The player only observes partial reward feedback according to the pulled arm and may use the past rewards to adapt its strategy. The goal is to balance the trade-off between exploration and exploitation over time and minimize the cumulative regret up to $T$ rounds.
The MAB setting, first introduced by \cite{first_mab}, has received extensive research during the past few decades due to its significant applications to online advertisements \cite{online_ads} and recommender systems \cite{news_contextual,liu2018information}. 
Recently, the contextual bandit setting \cite{first_context,develop_context} receives increasing interest
due to its efficiency in the case of large recommender system ($K$ is large) and interrelated reward distributions. Linear Upper Confidence Bound algorithm (LinUCB) \cite{news_contextual} was proposed for contextual bandit setting under a linear assumption, where the reward of each arm is predicted by a linear model of feature vectors $x\in\mathbbm{R}^p$ and linear regression parameter $\theta \in \mathbbm{R}^p$. This is known as the stochastic linear bandits. Chu et al. \shortcite{linucb} proved a lower bound of $O(\sqrt{Tp})$ for linear bandit setting, where $p$ is the dimension of feature vectors. It was shown that LinUCB can achieve this lower bound with a logarithm factor \cite{linucb}.

Most existing stochastic linear bandit algorithms like LinUCB and Linear Thompson Sampling (LinTS) \cite{lints}
assume the regression parameters for rewards stay non-stationary over time. 
However, in reality, the assumption of stationarity rarely holds. As an example, in news recommendation, a user might be more interested in political news during the presidential debate, and more interested in sports news during the NBA playoff season. Popular algorithms like LinUCB or LinTS which achieve optimal regret bounds in stationary environments could end up with linear regret for non-stationary environments in the worst case. Many efforts have been taken to emphasize this problem \cite{cheung2018learning,cheung2018hedging,russac2019weighted,wu2018learning}, including methods of passively and actively adapting to the changing environment.

We explore the solutions for piecewise-stationarity in stochastic bandit settings with linear assumptions, where the regression model parameter stays stationary for a while and changes abruptly at a certain time. The main idea is to design a changepoint detection method and perform the classic LinUCB algorithm within the intervals of homogeneity. When we detect a changepoint for an arm, we reset the LinUCB index for this arm. While the changepoint-based method sounds reasonable, it hasn't been successful due to the extreme difficulty of detecting faint changes in bandit problems. Piecewise-stationary environment in previous works mostly assumes the change in mean reward (at least for some portion of the arms) is bounded below by a constant \cite{wu2018learning}. However, faint changes are hardly ignorable. For example, neglecting to pull an optimal arm with faint changes over a stationary window of length $\Omega(T)$ is going to incur a large regret. 

In this paper, we first propose a piecewise-stationary environment with weaker assumptions, where we do not need the change in mean reward to be bounded below. We only require that for small changes, the consecutive stationary periods should be relatively long enough for our algorithm to detect a change and vice versa. We then propose a multiscale changepoint detection based algorithm, Multiscale-LinUCB, for the piecewise-stationary linear bandit setting (formally defined in Section \ref{pse}) and prove the nearly optimal regret bound for this algorithm. We show that the multiscale nature of the changepoint detector is essential for preventing poor regret when there are faint changes in reward distribution. Then we extend this setting to piecewise-stationary MAB bandit setting, where the reward distributions of some arms may change at certain changepoints. Extensive research in experiments show that our algorithm performs significantly better than other state-of-the-art algorithms in non-stationary environment.

\paragraph{Related Works: }
There is an important line of work for non-stationary MAB problems \cite{dis-sw,cusum,peter_new,besbes2014stochastic,mucb}. 
Recently, there has also been some novel researches that consider non-stationary contextual (can be non-linear) bandit algorithms where there are probabilistic assumptions on the context vectors \cite{chen2019new,luo2017efficient}. Chen et al. \shortcite{chen2019new} attains parameter-free and efficient algorithm assuming access to an 
ERM oracle. Here, we will only discuss some previous works on stochastic linear bandit algorithms for non-stationary environments 
, as those works are closely related to ours.

The recently developed D-LinUCB \cite{russac2019weighted} employs a weighted linear bandit model, where the weight is adjusted according to how recently the data point is observed. By putting a discount rate on past observations for computing the LinUCB index, it passively adapts to the changing environment. This work has its similarities in Discounted UCB, which is proposed for non-stationary MAB \cite{dis-sw}. In the same work, Garivier and Moulines \shortcite{dis-sw} proposed Sliding Window UCB for non-stationary MAB. Cheung et al. \shortcite{cheung2018learning} generalized it to Sliding Window LinUCB (SW-LinUCB) for non-stationary stochastic linear bandit. SW-LinUCB computes the LinUCB index according to the most recent $\tau$ observations, where $\tau$ is the sliding window size. Both D-LinUCB and SW-LinUCB assumes the knowledge of the total variation bound $B_T$ where $\sum_{t=1}^{T-1} \|\theta_{t+1}-\theta_t\|_2 \leq B_T$, which is rarely practical in reality. Here $\theta_t$ is the true model parameter for the regression model at time $t$. When the discount rate of D-LinUCB or the window size of SW-LinUCB is chosen based on $B_T$, both algorithms can attain a regret upper bound of $O\left((pT)^{\frac{2}{3}} (B_T)^{\frac{1}{3}}\right)$. 

In addition to passively adapting to the changing environment, there has also been substantial works considering actively adapting to the changing environments by changepoint detection methods. 
These works are mostly proposed for piecewise-stationary environment, and most of them assume the change in reward is bounded from below. 
The idea can track back to many algorithms in piecewise-stationary MAB environment \cite{cusum,mucb,pht}. 
For piecewise-stationary linear bandits, Wu et al. \shortcite{wu2018learning} proposed Dynamic Linear UCB (dLinUCB) algorithm. The key idea of dLinUCB is to maintain a master bandit model which keeps track of the ``badness" of several slave bandit models. The best slave model is chosen to determine which arm to pull each time and the feedback is shared with all models in the system. When there is no ``good" slave model in the pool, a change is detected and a new slave model is created. Wu et al. \shortcite{wu2018learning} showed that when the ``badness" of the model is set based on the proportion of arms changing and the lower bound of changes in rewards, then the algorithm can attain an optimal regret upper bound of $O(D\sqrt{S_{\max}}\log S_{\max})$, where $S_{\max}$ is the length of the longest stationary period and $D$ is the total number of changepoints. 

\section{Methodology}
\subsection{Problem Formulation}\label{setup}
We consider the contextual bandit problems with disjoint linear models proposed by Li et al. \shortcite{news_contextual} in non-stationary environment.
In a time horizon $[1,T]$, let $\mathcal{K} = \{1,\dots, K\}$ be the set of arms. At time $t$, the player has access to the feature vectors of every arm $\mathcal{A}_t = \{ x_{t,1}, x_{t,2}, \dots, x_{t,K}\}\subset \mathbbm{R}^p$. After observing $\mathcal{A}_t$, the player chooses an action $I_t \in \mathcal{K}$ and observe a sample reward $y_{t,I_t}$. Each time, the observed reward is independent of each other.

In the stationary setting, expected reward of arm $i$ at time $t$ is modeled as a function of unknown vector $\theta_i \in \mathbbm{R}^p$ and feature vectors. Under linear assumption, the expected reward becomes
\begin{equation}
\tag{stationary}
    \mu_{t,i} = E[y_{t,i} | x_{t,i}] = x_{t,i}^T \theta_i. 
\end{equation}

In non-stationary contextual setting, $\theta_i$ could change over time. We assume that for arm $i$ there are in total $\gamma_i+1$ changepoints, denoted as $\mathcal{C}_i = \{c_{i,0}, c_{i,1} \dots, c_{i,\gamma_i}\}$, where $c_{i,0}=1$ and $c_{i,\gamma_i} = T+1$. 
We say that $c_{i,j}$ is a changepoint for arm $i$ if the model parameter $\theta_i$ is different before and after time $c_{i,j}$.
Specifically, for the $\gamma_i$ stationary periods, we define the length of the $j$-th stationary period to be $S_{i,j} = c_{i,j} - c_{i,j-1}$ and are associated with an unknown parameter $\theta_{i,j} \in \mathbbm{R}^p$, where $j = 1, \dots, \gamma_i$. We have
\begin{equation}\label{model}
    \mu_{t,i} = x_{t,i}^T \theta_{i,j},  c_{i,j-1} \le t < c_{i,j}. 
\end{equation}

Define $\mathcal{C} = \displaystyle{\cup_{i=1}^K} \mathcal{C}_i$ and $D = |\mathcal{C}|$. Then $\mathcal{C}$ will be the set of all changepoints and $D$ is the total number of changepoints.
Note that it is possible that $\sum_{i=1}^K \gamma_i > D$, which means that there are multiple arms changing at the same time.
See Figure \ref{notation} for illustration of the notations.

\begin{figure}[H]
 \vskip -0.1in
  \begin{center}
    \includegraphics[width=0.9\columnwidth]{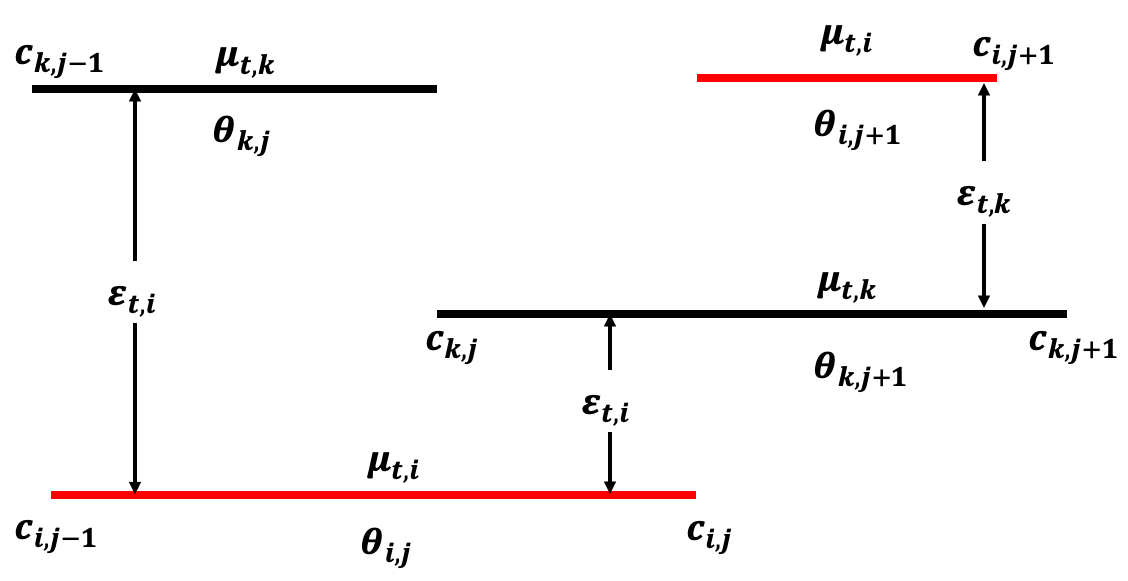}
  \end{center}
  \vskip -0.2in
  \caption{Illustration of notations. Black solid line represents arm $k$, red solid line represents arm $i$. At first, arm $k$ is the optimal arm, after changepoint $c_{i,1}$, arm $i$ becomes the optimal arm. }
  \label{notation}
   \vskip -0.1in
\end{figure}

Define the optimal arm at time $t$ to be $a_t$, i.e., $\mu_{t,a_t} = \max_{i=1}^K \mu_{t,i}$, where $\mu_{t,i}$ is defined in Equation \ref{model}. Also define $\epsilon_{t,i} = \mu_{t,a_t} - \mu_{t,i}$.
Similar to stationary settings, the goal of the decision maker is to find a policy $\pi$, so that following policy $\pi$, it chooses an arm $I_t$ every time to minimize the total regret over time, where
the total regret is defined to be
\begin{equation*}
    R_{\pi}(T) = E\left[ \sum_{t=1}^T (y_{t,a_t}, - y_{t,I_t}) \right]. 
\end{equation*}


\subsection{Piecewise-stationary Environment}\label{pse}
We study the piecewise-stationary environment in \cite{sideinfo}, where the reward distribution remains the same for a while and abruptly changes at a changepoint. 
In addition, we propose two mild assumptions for our piecewise-stationary contextual environment.
\begin{assum}\label{sub_g}
\textbf{(Sub-Gaussian Reward)}
The reward distribution is sub-Gaussian with parameter $\sigma^2$, without loss of generality, we assume $\sigma^2=1$ for the analysis below.
\end{assum}
Assumption \ref{sub_g} has been widely used in the literature. It includes the widely used Bernoulli reward in online recommender systems.

\begin{assum}\label{ass_context}
\textbf{(Detectability)}
There exists a constant $C^\prime$
such that the following holds.
For arm $i$ and adjacent stationary periods $j,j+1$ of length $S_{i,j}$ and $S_{i,j+1}$ respectively, true parameter $\theta_i$ changes from $\theta_{i,j}$ to $\theta_{i,j+1}$,
and for any $t$ in these two stationary periods,
define $\delta_{i,j} = \min_{t\in [c_{i,j-1}, c_{i,j+1})} \|x_{t,i}^T (\theta_{i,j} -\theta_{i,j+1})\|$. We assume the following inequalities hold.
\begin{align*}
S_{i,j} \delta_{i,j}^2  &\geq 4C^\prime p \sqrt{T\log T}, \\
S_{i,j+1} \delta_{i,j}^2  &\geq 4C^\prime p \sqrt{T\log T}.
\end{align*}
\end{assum}

Assumption \ref{ass_context} is weaker than most of the assumptions made in literature \cite{cusum,wu2018learning}. Most changepoint-based method for piecewise-stationary bandit assumes $\delta_{i,j}$ bounded below to ensure detectability. However, our method does not need this. Assumption \ref{ass_context} means that when $\delta_{i,j}$ is small, we need longer stationary periods $S_{i,j}$ and $S_{i,j+1}$ for us to detect a changepoint. For example, this condition allows stationary periods of length $\Omega(T)$ with faint changes $\delta_{i,j} \geq O(\sqrt[4]{\log T / T})$.

\subsection{Proposed Algorithm: Multiscale-LinUCB}
In this section, we introduce our proposed changepoint detection based LinUCB algorithm, Multiscale-LinUCB. 
Generally speaking, the algorithm performs LinUCB algorithm when there is no changepoint, and when we detect a changepoint for an arm, we reset the LinUCB index for this arm. 

One of the biggest challenges for changepoint detection in the stochastic bandit setting is that LinUCB will not pull every arm frequently enough to detect a change in reward distribution. Due to the nature of LinUCB, it will eventually stop pulling suboptimal arms, but this can cause a missed changepoint in this arm. If this arm then becomes optimal, this new optimal arm will continue to be neglected, resulting in a regret that is linear with $T$.
To remedy this problem, we randomly preselect some ``changepoint detection'' rounds with probability $\alpha := \sqrt{\frac{\log T}{T}}$ to pull arm $i$. These are rounds at which we pull an arm purely for the purpose of detecting changepoints.
Therefore, for each arm $i$, there will be approximately $\lceil \alpha T \rceil$ preselected rounds, $\bar B_i \subseteq\{1,\ldots,T\}$ such that $\{\bar B_i\}$ are disjoint. This probability $\alpha$ is carefully selected so that we can balance between minimizing total regret and the need of having enough samples for detecting changes in every arm. 
Moreover, in non-stationary bandit settings, there could be a changepoint at any time, so it is important to maintain at least some level of exploration all the time to make sure that we still have a chance to choose the optimal arm at current time, even though this optimal arm could be the worst arm in previous times. 

Let's focus on a single arm $i$ now. Assume we have detected the most recent changepoint $c_{i,\gamma}$, are now at time $t$, and for any cut point $t^{\prime} \in (c_{i,\gamma}, t]$, we cut the interval into the two parts. Define $B_1 = [c_{i,\gamma}, t^{\prime}) \cap \bar B_i$ and $B_2 = [t^{\prime}, t] \cap \bar B_i$. 
Over $B_1 \cup B_2$, we get observed rewards $\tilde y = (\tilde y_1^T, \tilde y_2^T)^T$, where $\tilde y_1 \in \mathbbm{R}^{|B_1|}$ has its elements as $y_{t,i}$, $t\in B_1$. Similarly, $\tilde y_2 \in \mathbbm{R}^{|B_2|}$ has its elements as $y_{t,i}$, where $t\in B_2$. Define the design matrix for arm $i$ at interval $B_1$ as $\tilde X_1 \in \mathbbm{R}^{|B_1|\times p}$. Each row of $\tilde X_1$ is $x_{t,i}$ where $t\in B_1$. We similarly have $\tilde X_2, \tilde X$ defined as the design matrices for arm $i$ at intervals $B_2$ and $B_1 \cup B_2$.


\begin{algorithm}[H]
\caption{Multiscale changepoint detection} 
\label{alg:cp}
\textbf{Input}: Arm index $i$, current time $t$, the most recent detection point $c_{i,\gamma}$.
\begin{algorithmic}[1]
   \FOR{$t^{\prime} = c_{i,\gamma}+1$ {\bfseries to} $t$}
        \STATE Calculate $Z_{i,t,t^{\prime}}$ according to Equation \ref{eq:context_stat}
        \IF{$Z_{i,t,t^{\prime}}^2 \geq Cp\log T$}
            \STATE Return changepoint $t$
            \STATE Break
        \ENDIF
   \ENDFOR
\end{algorithmic}
\end{algorithm}

Since OLS estimator is an unbiased estimator for $\theta_{i,j}$, to detect changepoints for arm $i$, we calculate the OLS estimators for intervals $B_1, B_2$ and $B_1 \cup B_2$ as follows:
\begin{align}
\label{eq:theta_one}
\hat{\theta_1} &= (\tilde X_1^T \tilde X_1)^{-1} \tilde X_1^T \tilde y_1, \\
\label{eq:theta_two}
\hat{\theta_2} &= (\tilde X_2^T \tilde X_2)^{-1} \tilde X_2^T \tilde y_2, \\
\label{eq:theta}
\hat{\theta} &= (\tilde X^T \tilde X)^{-1} \tilde X^T \tilde y. 
\end{align}

We claim there is a changepoint at time $t$ for arm $i$ if there exists a $t^\prime \in (c_{i,\gamma},t]$ such that $Z_{i,t,t'}^2 \geq C p\log T$, where $C$ is a constant to be specified and $Z_{i,t,t'}^2$ defined as, 
\begin{equation}
\label{eq:context_stat}
    Z_{i,t,t'}^2 := \| \tilde X_1(\hat\theta_1-\hat\theta) \|^2 + \| \tilde X_2 (\hat \theta_2 - \hat\theta) \|^2.
\end{equation}
Otherwise, we assert that there is no changepoint in interval $[c_{i,\gamma}, t]$. See Algorithm \ref{alg:cp} for details.
In our algorithm, we will need to check the following condition in order to verify the trustworthiness of our detection, and we also require this condition for the true changepoints.

\begin{assum}\label{length_context}
\textbf{(Minimum stationary length and well conditioned)}
There exists some universal constant $\xi \in (1,2)$, such that for every arm $i$, and two adjacent stationary regions, if we compute $B_1,B_2$ and $\tilde X_1,\tilde X_2$ then the following hold:
    $$
    \min \{ |B_1|, |B_2| \} \geq p 
    $$
    \text{and such that}
    $$
    \xi \tilde \Sigma_2 \succeq \tilde \Sigma_1 \succeq \xi^{-1} \tilde \Sigma_2.
    $$
    Here $\Sigma_j$ is the Gram matrix defined as 
\[
\tilde \Sigma_j := \frac{1}{|B_j|} \tilde X_j^\top \tilde X_j, \quad j=1,2.
\]
\end{assum}
Notice that Assumption \ref{length_context} is check-able for proposed $\xi, i, B_1, B_2$. 
Moreover, we have Proposition \ref{prop:assum4} below showing that Assumption \ref{length_context} is valid under many circumstances. 
\begin{proposition}\label{prop:assum4}
If matrix $\tilde X_1$ and $\tilde X_2$ have independent sub-Gaussian rows with the same second moment matrix, also assume the consecutive stationary periods are of length $S_1, S_2$ respectively and $S_1, S_2 = \omega(\sqrt{T\log T})$, we have Assumption \ref{length_context} holds with probability greater than $1-\frac{3}{T}$.
\end{proposition}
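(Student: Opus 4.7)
The plan is to verify the two clauses of Assumption~\ref{length_context} separately, each holding with probability at least $1-T^{-1}$, and then to union-bound.

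First I would control the sample sizes $|B_1|, |B_2|$. By construction each round in the stationary period of length $S_j$ is independently placed in $\bar B_i$ with probability $\alpha=\sqrt{\log T/T}$, so $|B_j|\sim \mathrm{Bin}(S_j,\alpha)$ with mean $\alpha S_j = S_j\sqrt{\log T/T}$. The hypothesis $S_j=\omega(\sqrt{T\log T})$ implies $\alpha S_j = \omega(\log T)$, so a standard multiplicative Chernoff bound yields $|B_j|\ge \tfrac12 \alpha S_j$ with probability at least $1-T^{-1}$ for each $j$ (once $T$ is sufficiently large). In particular $|B_j|=\omega(\log T)\gg p$, establishing the first inequality of Assumption~\ref{length_context} on this event.

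Second, I would verify the spectral ratio bound via matrix concentration. Conditional on $|B_j|$, the rows of $\tilde X_j$ are i.i.d.\ sub-Gaussian with common second-moment matrix $\Sigma$, so $\mathbb{E}[\tilde\Sigma_j]=\Sigma$. Applying a standard sub-Gaussian covariance-estimation bound (e.g.\ Vershynin-type: Theorem~5.39 in Vershynin, 2012), for each $j$ with probability at least $1-(2T)^{-1}$ one has
\[
\|\tilde\Sigma_j - \Sigma\|_{\mathrm{op}} \;\le\; C\,\|\Sigma\|_{\mathrm{op}}\left(\sqrt{\frac{p+\log T}{|B_j|}} \;+\; \frac{p+\log T}{|B_j|}\right).
\]
Since $|B_j|=\omega(\log T)$ on the event from the first step, the right-hand side is $o(1)$; hence for any preselected $\eta\in(0,\tfrac13)$ and $T$ large, $(1-\eta)\Sigma \preceq \tilde\Sigma_j \preceq (1+\eta)\Sigma$ for both $j=1,2$. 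Chaining these sandwiches gives $\tilde\Sigma_1 \preceq \tfrac{1+\eta}{1-\eta}\,\tilde\Sigma_2$ and symmetrically, so the second inequality of Assumption~\ref{length_context} holds with the choice $\xi = (1+\eta)/(1-\eta)\in(1,2)$.

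Finally, a union bound over the two Chernoff events (one per $|B_j|$) and the two matrix-concentration events yields failure probability at most $4/T$; absorbing constants or tightening one of the deviation tolerances reduces this to $3/T$ as claimed. The main obstacle is the matrix-concentration step: one has to invoke an off-the-shelf sub-Gaussian second-moment bound with the correct dependence on $p$, $|B_j|$, and the tail parameter, and implicitly treat $\|\Sigma\|_{\mathrm{op}}$ and the sub-Gaussian row norm as constants (as is implicit throughout the paper). The Chernoff step is entirely routine, and no cancellation between the two sub-events is needed because $\tilde X_1$ and $\tilde X_2$ come from disjoint rounds and are therefore independent.
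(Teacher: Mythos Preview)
Your proposal follows essentially the same route as the paper: a Chernoff bound to guarantee $|B_j|\gg \log T$ from $S_j=\omega(\sqrt{T\log T})$, followed by a Vershynin--type sub-Gaussian covariance concentration result to sandwich each $\tilde\Sigma_j$ near the common second-moment matrix, and a final union bound.

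The one place where the paper is slightly cleaner is the matrix step. The paper first \emph{whitens}: it passes to $V^{-1/2}\tilde\Sigma_j V^{-1/2}=\frac{1}{|B_j|}(\tilde X_j V^{-1/2})^\top(\tilde X_j V^{-1/2})$, whose rows are isotropic sub-Gaussian, and then applies Vershynin's isotropic result to get eigenvalue bounds $(1-q_j)^2\le \lambda_{\min}\le\lambda_{\max}\le (1+q_j)^2$ with $q_j=O(\sqrt{p/|B_j|}+\sqrt{\log T/|B_j|})$. The desired ordering $\xi^{-1}\tilde\Sigma_2\preceq\tilde\Sigma_1\preceq\xi\tilde\Sigma_2$ then follows directly from the eigenvalue ratio, with no dependence on $\Sigma$ at all. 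In your version you bound $\|\tilde\Sigma_j-\Sigma\|_{\mathrm{op}}$ and then pass to the multiplicative sandwich $(1-\eta)\Sigma\preceq\tilde\Sigma_j\preceq(1+\eta)\Sigma$; that last step actually requires $\|\tilde\Sigma_j-\Sigma\|_{\mathrm{op}}\le \eta\,\lambda_{\min}(\Sigma)$, not just $\eta\,\|\Sigma\|_{\mathrm{op}}$, so a condition-number factor sneaks in unless you whiten first (which is exactly why Theorem~5.39 that you cite is stated for isotropic rows). This is a minor technical gap, easily closed by inserting the whitening step, after which your argument and the paper's coincide.
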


Our proposed Multiscale-LinUCB algorithm is formally presented in Algorithm \ref{alg:cp_ucb_context}. Our analysis is only valid when the preselected rounds, $B_1,B_2$ can be considered fixed (or predetermined independently of the data). In practice, we will combine these rounds with those sampled from the LinUCB steps as well for the changepoint detection steps.
We would like to clarify that Assumption \ref{ass_context}, \ref{length_context} are only needed in theoretical analysis. In practice, our proposed Multiscale-LinUCB can achieve significantly better experimental results even if these two assumptions do not hold in some settings, as shown in Section \ref{exps}.

\begin{algorithm}[H]
\caption{Multiscale-LinUCB for contextual bandit} 
\label{alg:cp_ucb_context}
\textbf{Input}: Input: $\xi \in (1,2), C > 0$ 
\begin{algorithmic}[1]
\STATE For each arm, $i$, randomly preselect $\lceil \alpha T \rceil$ times, $\bar B_i \subseteq\{1,\ldots,T\}$ such that $\{\bar B_i\}$ are disjoint. 
\STATE Initialize $c_{i,\gamma} = 1$, $\theta_i = 0$ for all $i=1,\dots, K$.
    \FOR{$t = 1$ {\bfseries to} $T$}
        \IF {$t \in \bar B_i$ for some $i$}
            \STATE Pull arm $i$ 
            \STATE Observe $y_{t,i}, x_{t,i}$ and add to $\tilde y$, $\tilde X$ respectively
            \STATE Run Algorithm \ref{alg:cp} with input $i,t, c_{i,\gamma}$.
            \IF {Algorithm \ref{alg:cp} returns a changepoint and Assumption \ref{length_context} holds with $\xi$}
                \STATE Update $c_{i,\gamma} \gets t$
                \STATE Reset $\theta_i \gets 0$ in LinUCB.
            \ENDIF
        \ELSE
            \STATE Run steps of LinUCB
        \ENDIF
    \ENDFOR
 \end{algorithmic}
 \end{algorithm}

\section{Analysis}\label{analysis_context}
If Algorithm \ref{alg:cp} can detect every changepoint successfully, then we can just restart the LinUCB algorithm at the beginning of every stationary period to achieve a regret upper bound of $D R_{\text{LinUCB}}(S_{\max}) = \tilde O(D\sqrt{S_{\max}})$, where $S_{\max}$ is defined to be the length of the longest stationary period and $R_{\text{LinUCB}}(S)$ is the regret of LinUCB in a stationary period of length $S$. However, for every changepoint detection method, there will be false alarms and detection delays. Assume we are at time $t$, false alarm means that even though there is no changepoint in interval $[c_{i,\gamma}, t]$, the algorithm alarms us that there is a changepoint at time $t$. 
For changepoint $c_{i,\gamma}$, if the algorithm alarms us at time $t > c_{i,\gamma}$, then the detection delay is defined to be $t-c_{i,\gamma}$. 

The following lemma controls the probability of missed changepoints when the sampled stationary regions $B_1,B_2$ satisfy our Assumption \ref{length_context} and Equation \ref{important_contextual_assum}.
\begin{lem}\label{delay_context}
Let $\xi \in (1,2)$.
Consider all adjacent sampled regions $B_1,B_2$ that satisfy 

    $(1).$ Assumptions \ref{length_context} holds;
    
    $(2).$ 
For an arm $i$, and two adjacent stationary regions, $j,j+1$, where $\theta_{i,j}$ changes to $\theta_{i,j+1}$, if we compute $\tilde X$ over $B_1 \cup B_2$, we have
\begin{equation}\label{important_contextual_assum}
    \frac{|B_1|\cdot |B_2|}{(|B_1| + |B_2|)^2} \| \tilde X (\theta_{i,j} - \theta_{i,j+1}) \|^2 \geq C_\xi p \log T,
\end{equation}
where $C_\xi$ is some constant depending on $\xi$ only and it is the same constant in Assumption \ref{ass_context}, $C^\prime = C_\xi$.
Then there exists a constant $C$ dependent on the input $\xi$ such that if we run Algorithm \ref{alg:cp_ucb_context}, then we can detect all such changepoints with probability at least $1 - 1/T$.
\end{lem}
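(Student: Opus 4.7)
The plan is to recognize $Z_{i,t,t'}^2$ as a noncentral quadratic form in the sub-Gaussian reward noise and to lower bound it by separating signal from fluctuation. Let $P_1,P_2$ be the orthogonal projections onto $\mathrm{col}(\tilde X_1),\mathrm{col}(\tilde X_2)$, viewed as subspaces of $\mathbbm{R}^{|B_1|+|B_2|}$, and let $P$ be the projection onto $\mathrm{col}(\tilde X)$. Using $\tilde X_j\hat\theta_j=P_j\tilde y_j$ and $\tilde X\hat\theta=P\tilde y$, together with $\mathrm{col}(\tilde X)\subseteq\mathrm{col}(\tilde X_1)\oplus\mathrm{col}(\tilde X_2)$, I rewrite
\[
Z_{i,t,t'}^2 \;=\; \tilde y^\top Q\,\tilde y, \qquad Q := (P_1\oplus P_2)-P,
\]
where $Q$ is itself an orthogonal projection of rank $p+p-p=p$ thanks to Assumption \ref{length_context}. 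Writing $\tilde y=\mu+\epsilon$ with $\mu=[\tilde X_1\theta_{i,j};\,\tilde X_2\theta_{i,j+1}]$ and $\epsilon$ having independent $1$-sub-Gaussian entries,
\[
Z_{i,t,t'}^2 \;=\; \mu^\top Q\mu + 2\mu^\top Q\epsilon + \epsilon^\top Q\epsilon,
\]
and the last term is automatically non-negative since $Q$ is positive semidefinite, so it can be discarded in a lower bound.

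For the signal term I use $\mu^\top Q\mu=\|(I-P)\mu\|^2=\min_\beta\|\mu-\tilde X\beta\|^2$. Setting $A=\tilde X_1^\top\tilde X_1$, $B=\tilde X_2^\top\tilde X_2$, $\Sigma=A+B$, $\Delta=\theta_{i,j}-\theta_{i,j+1}$, solving for the optimal $\beta^\star=\Sigma^{-1}(A\theta_{i,j}+B\theta_{i,j+1})$ gives the closed form
\[
\|(I-P)\mu\|^2 \;=\; \Delta^\top\bigl(A\Sigma^{-1}B\Sigma^{-1}A+B\Sigma^{-1}A\Sigma^{-1}B\bigr)\Delta,
\]
which collapses to $\tfrac{|B_1||B_2|}{(|B_1|+|B_2|)^2}\|\tilde X\Delta\|^2$ in the balanced case $\tilde\Sigma_1=\tilde\Sigma_2$. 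The spectral equivalence $\xi^{-1}\tilde\Sigma_2\preceq\tilde\Sigma_1\preceq\xi\tilde\Sigma_2$ of Assumption \ref{length_context} then yields a constant $c_\xi>0$ depending only on $\xi$ with
\[
\|(I-P)\mu\|^2 \;\geq\; c_\xi\cdot\frac{|B_1|\cdot|B_2|}{(|B_1|+|B_2|)^2}\,\|\tilde X\Delta\|^2,
\]
and combining with hypothesis \eqref{important_contextual_assum} gives $\mu^\top Q\mu\geq c_\xi C_\xi\,p\log T$.

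For the cross term, $\mu^\top Q\epsilon$ is a scalar sub-Gaussian variable with variance proxy $\mu^\top Q^2\mu=\mu^\top Q\mu$ by idempotence of $Q$, so a standard Hoeffding-type bound gives $|2\mu^\top Q\epsilon|\leq 2\sqrt{2\,\mu^\top Q\mu\cdot\log(1/\delta)}$ with probability at least $1-\delta$. I would take $\delta=1/(KT^3)$ and union bound over all $O(KT^2)$ candidate triples $(i,t,t')$, which keeps the total failure probability below $1/T$ while only inflating $\log(1/\delta)$ to $O(\log T)$. Whenever $\mu^\top Q\mu\geq 32\log(1/\delta)$ the elementary inequality $a-2\sqrt{2a\log(1/\delta)}\geq a/2$ applied with $a=\mu^\top Q\mu$ forces $\mu^\top Q\mu+2\mu^\top Q\epsilon\geq\tfrac12\mu^\top Q\mu$. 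Choosing $C_\xi$ large enough in $\xi$ that $c_\xi C_\xi\geq 2C$ and $c_\xi C_\xi\,p\log T\geq 32\log(KT^3)$, I conclude
\[
Z_{i,t,t'}^2 \;\geq\; \tfrac12\mu^\top Q\mu \;\geq\; Cp\log T
\]
simultaneously for every cut satisfying conditions (1)--(2).

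The hard part will be the signal calculation, namely producing the clean lower bound on $\|(I-P)\mu\|^2$ with an explicit constant $c_\xi$ depending only on $\xi$: the asymmetric matrix expression $A\Sigma^{-1}B\Sigma^{-1}A+B\Sigma^{-1}A\Sigma^{-1}B$ is nonlinear in $A,B$ and has to be compared to $\tfrac{|B_1||B_2|}{|B_1|+|B_2|}\tilde\Sigma$ using only the spectral-equivalence constraint on $\tilde\Sigma_1,\tilde\Sigma_2$. Once this is in hand the cross-term bound is a one-line sub-Gaussian concentration and, crucially, no Hanson--Wright inequality for $\epsilon^\top Q\epsilon$ is required because that term is always non-negative.
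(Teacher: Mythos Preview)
Your approach is correct and takes a genuinely different route from the paper on the noise side. The paper writes $Z_{i,t,t'}=\|(P'-P)\tilde y\|$, applies the triangle inequality $\|(P'-P)\tilde y\|\ge\|(P'-P)\mu\|-\|(P'-P)\epsilon\|$, and then controls the full noise norm $\|(P'-P)\epsilon\|^2=\epsilon^\top(P'-P)\epsilon$ via the Hanson--Wright inequality. You instead expand the square, discard the nonnegative quadratic $\epsilon^\top Q\epsilon$, and bound only the linear cross term $\mu^\top Q\epsilon$ by Hoeffding, since it is a scalar sub-Gaussian with variance proxy $\|Q\mu\|^2=\mu^\top Q\mu$. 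This is cleaner: it removes the need for Hanson--Wright entirely, and the self-normalizing feature (the deviation scales with $\sqrt{\mu^\top Q\mu}$) makes the ``signal minus fluctuation'' comparison automatic once $\mu^\top Q\mu\gtrsim\log T$. The price is nothing; both arguments union-bound over $O(KT^2)$ triples at the same $3\log T+\log K$ level.

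On the signal side the two proofs are computing the same quantity, since $\mu$ lies in the block column space and hence $(P'-P)\mu=(I-P)\mu=Q\mu$. Your closed form $\Delta^\top(A\Sigma^{-1}B\Sigma^{-1}A+B\Sigma^{-1}A\Sigma^{-1}B)\Delta$ is correct, but the paper's argument for the lower bound is worth knowing because it avoids wrestling with that expression directly: write $(P'-P)\mu=(P'-P)(\mu-\tilde X\theta')$ for the convex combination $\theta'=\zeta\theta_1+(1-\zeta)\theta_2$ with $\zeta=|B_1|/(|B_1|+|B_2|)$, then use Pythagoras $\|(P'-P)v\|^2=\|v\|^2-\|Pv\|^2$ on $v=((1-\zeta)\tilde X_1\Delta,\,-\zeta\tilde X_2\Delta)$. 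The first term is exactly $\frac{|B_1||B_2|}{(|B_1|+|B_2|)^2}\|\tilde X\Delta\|^2$, and the second term is bounded via $(1-\zeta)A-\zeta B=\frac{|B_1||B_2|}{|B_1|+|B_2|}(\tilde\Sigma_1-\tilde\Sigma_2)$ together with the spectral sandwich from Assumption~\ref{length_context}, yielding the explicit constant $c_\xi=1-\tfrac{\xi(\xi-1)}{2}$. Plugging that into your framework completes the proof with no remaining ``hard part.''
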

\begin{proof}[Proof Sketch] 
For ease of notation let $\theta_1 = \theta_{i,j}, \theta_2 = \theta_{i,j+1}$. Note that 
$(\tilde X_1 \hat \theta_1, \tilde X_2 \hat \theta_2)$ 
is the projection of $\tilde y$ onto the column space of the block diagonal matrix whose two blocks are $\tilde X_1,\tilde X_2$ respectively. We denote this projection as $P'$ and let $P$ be the projection onto the column space of $\tilde X$. Denote $\epsilon$ the 0-mean sub-Gaussian noise vector, we have $Z_{i,t,t'} = \|(P' - P) \tilde y\| \geq \|(P' - P) ((\tilde X_1 \theta_1)^\top, (\tilde X_2 \theta_2)^\top)^\top \| - \| (P' - P) \epsilon \|$, where we have decomposed $\tilde y$ into its mean and the noise term $\epsilon$. 
For the first term, by the properties of projection matrix and Assumption \ref{length_context}, one can verify that 
it is greater than $\Xi \frac{|B_1||B_2|}{(|B_1| + |B_2|)^2} \| \tilde X (\theta_1 - \theta_2) \|^2,$ where $\Xi = 1 - \frac{\xi (\xi - 1)}{2} > 0$.
For noise term, by idempotency, we know $\| (P' - P) \epsilon \|^2 = \epsilon^\top (P' - P) \epsilon$.
By Hanson-Wright inequality \cite{hsu2012tail}, we have
$\mathbbm P \left\{ \epsilon^\top (P' - P) \epsilon \ge p + 2 \sqrt{pu^{\prime}} + 2u^{\prime}  \right\} \le e^{-u^{\prime}}$
for $u^{\prime}>0$. So we detect the changepoint with probability at least $1-e^{-u^{\prime}}$ as long as 
$\frac{|B_1||B_2|}{(|B_1| + |B_2|)^2} \| \tilde X (\theta_1 - \theta_2) \|^2 \ge C p u^{\prime} > \Xi^{-1} (p + 2 \sqrt{pu^{\prime}} + 2u^{\prime}),
$ for some constant $C$ depending on $\Xi$ only.
We can set $u^{\prime} = 3 \log T + \log K$ and apply the union bound to obtain our desired result.
\end{proof}

Condition (2) in Lemma \ref{delay_context} holds when Assumption \ref{ass_context} holds with $C^\prime = C_\xi$, so we can bound detection delay in Lemma \ref{Lij_delay_context}.

\begin{lem}\label{Lij_delay_context}
Under Assumption \ref{sub_g}, \ref{length_context}, also Assumption \ref{ass_context} holds with $C^\prime = C_\xi$, the detection delay for changepoint $c_{i,j}$, where $i=1,\dots,K$ and $j=1,\dots, \gamma_i$, satisfies the following equation with probability at least $1-\frac{1}{T}$,
\begin{equation}
    L_{i,j} = \frac{4 C_{\xi} p \sqrt{T \log T }}{\delta^2_{i,j}}.
\end{equation}
\end{lem}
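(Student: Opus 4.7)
My plan is to apply Lemma~\ref{delay_context} at the first preselected time $t \in \bar B_i$ with $t \ge c_{i,j} + L_{i,j}$, choosing the candidate split inside Algorithm~\ref{alg:cp} to be $t' = c_{i,j}$. With this choice, $B_1 = \bar B_i \cap [c_{i,\gamma}, c_{i,j})$ and $B_2 = \bar B_i \cap [c_{i,j}, t]$. Since Assumption~\ref{length_context} is a direct hypothesis, the only nontrivial thing to verify is condition (2) of Lemma~\ref{delay_context}, namely
\[
\frac{|B_1|\,|B_2|}{(|B_1|+|B_2|)^2}\,\|\tilde X(\theta_{i,j}-\theta_{i,j+1})\|^2 \;\ge\; C_\xi\, p \log T.
\]

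For the deterministic factor, observe that by the definition of $\delta_{i,j}$, every row index $s \in B_1 \cup B_2 \subseteq [c_{i,j-1}, c_{i,j+1})$ satisfies $(x_{s,i}^\top(\theta_{i,j}-\theta_{i,j+1}))^2 \ge \delta_{i,j}^2$, so summing over rows of $\tilde X$ gives $\|\tilde X(\theta_{i,j}-\theta_{i,j+1})\|^2 \ge (|B_1|+|B_2|)\delta_{i,j}^2$, and the left-hand side above simplifies to $\tfrac{|B_1|\,|B_2|}{|B_1|+|B_2|}\,\delta_{i,j}^2$. For the random factor, each time step is independently preselected for arm $i$ with probability $\alpha = \sqrt{\log T / T}$, so a multiplicative Chernoff bound yields $|B_1| \ge \alpha S_{i,j}/2$ and $|B_2| \ge \alpha L_{i,j}/2$ with probability at least $1 - O(T^{-2})$ (Assumption~\ref{ass_context} forces these binomial means to be large enough for the Chernoff estimate to be in its small-deviation regime). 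Because Assumption~\ref{ass_context} further gives $S_{i,j}\delta_{i,j}^2 \ge 4 C_\xi p\sqrt{T\log T} = L_{i,j}\delta_{i,j}^2$, we have $S_{i,j} \ge L_{i,j}$, hence $|B_1| \ge |B_2|$, and the elementary inequality $\tfrac{ab}{a+b} \ge b/2$ whenever $a \ge b$ combines the two factors as
\[
\frac{|B_1|\,|B_2|}{|B_1|+|B_2|}\,\delta_{i,j}^2 \;\ge\; \frac{\alpha L_{i,j}\,\delta_{i,j}^2}{4} \;=\; \frac{\sqrt{\log T/T}\,\cdot\, 4 C_\xi p \sqrt{T\log T}}{4} \;=\; C_\xi\, p \log T,
\]
which is exactly condition (2) of Lemma~\ref{delay_context}. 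That lemma then guarantees Algorithm~\ref{alg:cp} fires at time $t$, yielding detection delay at most $L_{i,j}$; a union bound over all $\sum_i \gamma_i$ true changepoints and the $O(T)$ candidate $(t,t')$ pairs within each absorbs the per-event failure probability into a single $1/T$ by a mild re-tuning of the constant in the Hanson--Wright step inside Lemma~\ref{delay_context}.

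The step I expect to be the main obstacle is the ``chained'' case where the algorithm's bookkeeping $c_{i,\gamma}$ does not equal $c_{i,j-1}$ but some earlier detection (e.g., the detection of $c_{i,j-2}$ plus its own delay), in which case $B_1$ straddles more than one stationary region and the row-wise lower bound $\delta_{i,j}^2$ fails for rows lying in an older segment. I would close this by induction on $j$: the bound for index $j-1$ places $c_{i,\gamma}$ within $[c_{i,j-1}, c_{i,j-1} + L_{i,j-1}]$, so all rows indexed by $B_1$ still live in the $j$-th stationary region and the argument above applies verbatim; the base case $j=1$ is immediate. A secondary subtlety is that Lemma~\ref{delay_context} is stated with $B_1, B_2$ fixed, but this is harmless: the preselection is independent of the rewards, so after conditioning on the Chernoff event (a function of the preselection alone) we may invoke Lemma~\ref{delay_context} on the conditional distribution of the rewards.
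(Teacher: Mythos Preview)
Your proposal is correct and follows the paper's proof essentially line for line: both arguments lower-bound $\|\tilde X(\theta_{i,j}-\theta_{i,j+1})\|^2$ row-wise by $(|B_1|+|B_2|)\delta_{i,j}^2$, apply a multiplicative Chernoff bound to the preselection counts (the paper packages this as a separate Lemma~\ref{length}), feed the resulting harmonic-mean estimate into condition~(2) of Lemma~\ref{delay_context}, and take a union bound. One cosmetic slip worth noting: from $S_{i,j}\ge L_{i,j}$ together with two Chernoff \emph{lower} bounds you cannot conclude $|B_1|\ge |B_2|$; but the intended inequality still follows because $xy/(x+y)$ is increasing in each argument, so you may plug the lower bounds $\alpha S_{i,j}/2\ge \alpha L_{i,j}/2$ directly into that expression to obtain the same $\alpha L_{i,j}/4$. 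Your induction remark on the chained case and the conditioning-on-preselection observation go slightly beyond what the paper writes (it simply posits that $c_{i,j-1}$ has already been detected), but they are in the same spirit.
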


The following lemma bounds the false alarm probability. 

\begin{lem} \label{false_context}
Suppose that Assumption \ref{sub_g} holds.
There exists a constant $C$ dependent on the input $\xi$ such that if we run Algorithm \ref{alg:cp_ucb_context}, then the probability of any false alarm 
during a stationary region is bounded above by $1/T$.
\end{lem}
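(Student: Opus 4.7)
The plan is to reduce the false-alarm event, under stationarity, to a chi-square-type tail bound on a sub-Gaussian quadratic form, and then union-bound over the relevant choices of arm, current time, and cut point. Fix an arm $i$ and a stationary region containing the triple $(i,t,t')$ that we are testing. Because the true parameter $\theta$ is constant throughout this region, we have $\tilde y = \tilde X \theta + \epsilon$ for the same $\theta$ on both sides of $t'$, where $\epsilon$ is a vector of independent 1-sub-Gaussian noise entries.

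Next, I would re-use exactly the projection identities from the proof sketch of Lemma~\ref{delay_context}. Let $P'$ denote the orthogonal projection onto the column space of the block-diagonal matrix $\mathrm{diag}(\tilde X_1,\tilde X_2)$ and $P$ the orthogonal projection onto the column space of $\tilde X$. Then $Z_{i,t,t'}^{2}=\|(P'-P)\tilde y\|^{2}$. Since the column space of $\tilde X$ is contained in the column space of $\mathrm{diag}(\tilde X_1,\tilde X_2)$, the mean $\tilde X\theta$ is fixed by both $P$ and $P'$; hence $(P'-P)\tilde X\theta=0$ and the statistic collapses to the pure-noise quadratic form
\begin{equation*}
Z_{i,t,t'}^{2} \;=\; \epsilon^{\top}(P'-P)\epsilon,
\end{equation*}
where $P'-P$ is an orthogonal projection of rank at most $p$ (the column space of $\mathrm{diag}(\tilde X_1,\tilde X_2)$ has dimension at most $2p$, that of $\tilde X$ at most $p$, and the former contains the latter).

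I would then apply the Hanson--Wright inequality of \cite{hsu2012tail} in exactly the form used in the sketch of Lemma~\ref{delay_context}: for any $u'>0$,
\begin{equation*}
\mathbb{P}\!\left\{ \epsilon^{\top}(P'-P)\epsilon \;\ge\; p + 2\sqrt{p u'} + 2u' \right\} \;\le\; e^{-u'}.
\end{equation*}
Choosing $u' = 4\log T + \log K$, the right-hand tail event has probability at most $(KT^{4})^{-1}$, and by the elementary bound $p+2\sqrt{pu'}+2u'\le 2p+3u'$ the threshold $Cp\log T$ dominates it once $C$ is chosen as a sufficiently large universal constant (no dependence on the data is needed; the constant $C$ in the statement may depend on $\xi$ only through this choice).

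Finally I would union-bound the single-test failure probability over all triples $(i,t,t')$ that Algorithm~\ref{alg:cp_ucb_context} can ever test within a stationary region. The outer arm index ranges over at most $K$ values, the test time $t$ ranges over at most $T$ rounds, and for each such $t$ the inner loop of Algorithm~\ref{alg:cp} scans at most $T$ candidate cut points $t'$; hence there are at most $KT^{2}$ tests, and the total false-alarm probability is bounded by $KT^{2}\cdot (KT^{4})^{-1}=1/T^{2}\le 1/T$, as claimed. The main (and essentially only) obstacle is the projection-containment step, which is what guarantees that the mean component of $\tilde y$ drops out; once that is in place the remainder is a routine Hanson--Wright tail bound followed by a union bound, and no appeal to Assumption~\ref{length_context} or Assumption~\ref{ass_context} is required.
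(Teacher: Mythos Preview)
Your proposal is correct and follows essentially the same approach as the paper: reduce $Z_{i,t,t'}^2$ to $\epsilon^\top(P'-P)\epsilon$ via the nested-projection identity, apply the Hanson--Wright inequality of \cite{hsu2012tail}, and union-bound over all choices of arm, $t$, and $t'$. The only cosmetic differences are that the paper takes $u'=3\log T+\log K$ (yielding exactly $1/T$ after the union bound) whereas you take $u'=4\log T+\log K$, and the paper invokes Assumption~\ref{length_context} to assert ${\rm tr}(P'-P)=p$ while you use the weaker ${\rm tr}(P'-P)\le p$, which suffices.
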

\begin{proof}[Proof Sketch] 
We can apply much of the same reasoning as in the previous proof.
Notice that within a stationary region, there is a single vector $\theta \in \mathbbm{R}^p$ such that $y = \tilde X \theta + \epsilon$, so $(P' - P)y = (P' - P)\epsilon$.
As before, we have that by the Hanson-Wright inequality,
$\mathbbm P \left\{ \epsilon^\top (P' - P) \epsilon \ge p + 2 \sqrt{pu^{\prime}} + 2u^{\prime}  \right\} \le e^{-u^{\prime}}.
$
So, we do not detect a changepoint for a single selection of $t^{\prime},t$ within a stationary region, with probability $1 - e^{-u^{\prime}}$ because $Cpu^{\prime} > p + 2 \sqrt{pu^{\prime}} + 2u^{\prime}$.
Furthermore, we can apply this result uniformly over the selections of arm, $t',t$ within stationary regions with the union bound,
and arrive at our conclusion by setting $u^{\prime} = 3 \log T + \log K$.
\end{proof}
Finally, we can piece all of these components together to obtain a regret bound that is nearly optimal \cite{wu2018learning,garivier2008upper}. 
\begin{thm}\label{regret_context}
Denote $\underline\delta = \min_{i=1}^K \min_{j=1}^{\gamma_i} \delta_{i,j}$ and $\Delta = \displaystyle{\max_{t=1}^T \max_{i \neq a_t}} \epsilon_{t,i}$. 
Under Assumption \ref{sub_g}, \ref{length_context}, and if Assumption \ref{ass_context} holds with $C^\prime = C_\xi$ depending on $\xi$ only, 
the regret bound for Multiscale-LinUCB satisfies the following:
\begin{equation}
\resizebox{.99\linewidth}{!}{$
    \displaystyle
    R(T) \leq \frac{4D C_{\xi} p\sqrt{T\log T}\Delta}{\underline\delta^2}  + 2 D R_{\text{LinUCB}}(S_{\max}) +  K \Delta \sqrt{T\log T}.
$}
\end{equation}%
\end{thm}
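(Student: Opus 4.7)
The plan is to decompose the expected regret into three sources and control each using the detection-delay and false-alarm lemmas already established. First I would define a high-probability ``good event'' $\mathcal{E}$ on which two things hold simultaneously: (i) every true changepoint $c_{i,j}$ is flagged by Algorithm \ref{alg:cp} with delay at most $L_{i,j} = 4 C_\xi p\sqrt{T\log T}/\delta_{i,j}^2$, and (ii) no false alarm occurs inside any stationary region. Lemma \ref{Lij_delay_context} and Lemma \ref{false_context} each give probability at least $1-1/T$, so by a union bound $\mathbb{P}(\mathcal{E}^c)\le 2/T$. On $\mathcal{E}^c$ the regret is bounded trivially by $T\Delta$, contributing only an $O(\Delta)$ additive term to the expectation, which is absorbed into the other terms.

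On $\mathcal{E}$ I would split each round $t\in\{1,\ldots,T\}$ into one of three disjoint categories and bound the contribution of each:

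\begin{itemize}
\item \emph{Forced exploration rounds} $t\in\bigcup_i \bar B_i$. Each pull costs at most $\Delta$, and $|\bigcup_i\bar B_i|\le K\lceil\alpha T\rceil$. Plugging in $\alpha=\sqrt{\log T/T}$ yields the $K\Delta\sqrt{T\log T}$ term.
\item \emph{Detection-delay rounds} — the at most $L_{i,j}$ rounds between a true changepoint $c_{i,j}$ and the moment Algorithm \ref{alg:cp_ucb_context} resets arm $i$. Each costs at most $\Delta$, and summing $\sum_{i,j}L_{i,j}\le D\cdot 4C_\xi p\sqrt{T\log T}/\underline\delta^2$ gives the first term of the bound.
\item \emph{Clean LinUCB rounds} — everything else. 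On $\mathcal{E}$ these rounds partition into at most $D$ intervals, each entirely contained in one true stationary segment of length $\le S_{\max}$, so the standard stationary LinUCB regret bound applies per segment.
\end{itemize}

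The main subtlety — and the place where the constant $2$ in $2D\,R_{\text{LinUCB}}(S_{\max})$ enters — is that the reset of $\theta_i$ happens at the \emph{detection} time rather than at the true changepoint, so within a single true stationary segment one may see two distinct LinUCB runs: a ``stale'' one carried over from the previous segment during the detection delay, and a ``fresh'' one after the reset. I would bound each by $R_{\text{LinUCB}}(S_{\max})$ separately, noting that during the stale run the optimal arm has not changed if the delay interval sits inside one segment, while across a segment boundary the stale run is already charged to the delay term above but we upper bound it once more by a LinUCB regret term to keep the argument modular. Summing the ``fresh'' runs over the $D$ segments and the ``stale'' runs yields $2D\,R_{\text{LinUCB}}(S_{\max})$. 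I expect this bookkeeping between the detection-delay interval and the two LinUCB phases to be the trickiest step; everything else is a direct combination of the three lemmas with a union bound, and the final regret bound follows by adding the three contributions and accounting for $\mathbb{P}(\mathcal{E}^c)\le 2/T$.
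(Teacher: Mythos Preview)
Your proposal is correct and matches the paper's approach: decompose the regret into the forced-exploration cost $K\alpha T\Delta$, the detection-delay cost via Lemma~\ref{Lij_delay_context}, and per-segment LinUCB regret, with Lemma~\ref{false_context} controlling spurious restarts. The only bookkeeping difference is the source of the factor~$2$: the paper writes $R(T)\le \sum_{j} R_{\text{LinUCB}}(S_j)+K\alpha T\Delta+R_{\text{delay}}(T)+R_{\text{false}}(T)$ and simply bounds $R_{\text{false}}(T)\le \sum_{j} R_{\text{LinUCB}}(S_j)$ (at most one extra restart per stationary segment), rather than invoking your stale/fresh LinUCB-run split---both routes yield $2D\,R_{\text{LinUCB}}(S_{\max})$.
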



\section{Extensions}
\subsection{Non-stationary Joint Linear Models}
In addition to disjoint linear models, Chu et al. \shortcite{linucb} also proposed a contextual framework for joint linear model. We consider the extension of Multiscale-LinUCB for joint linear models below. This model is also consistent with the one considered by Russac et al. \shortcite{russac2019weighted} and Cheung et al. \shortcite{cheung2018learning}.
\begin{equation*}
    \mu_{t,i} = x_{t,i}^T \theta_{j},  c_{j-1} \le t < c_{j}. 
\end{equation*}
There are still $D$ changepoints $\mathcal{C} = \{c_0, \dots, c_{D-1}\}$ in total, where $c_0 = 1$ and $c_{D-1} = T+1$. However, the changepoints and model parameter is now invariant to the arms. In the $j$-th stationary period $[c_{j-1}, c_{j})$, each arm is associated with the same model parameter $\theta_j$. 

The analog of Multiscale-LinUCB algorithm for joint linear model is basically the same. 
However, now we only need to randomly preselect $\lceil \alpha T \rceil$ rounds in total, denoted as $\bar B$. 
For cut point $t^\prime \in (c_{i,\gamma}, t]$, we similarly define $B_1 = [c_{j}, t^{\prime}) \cap \bar B$ and $B_2 = [t^{\prime}, t] \cap \bar B$. $\tilde y_1 = (y_{t, I_t})_{t\in B_1}$, $\tilde y_2 = (y_{t, I_t})_{t\in B_2}$, $\tilde X_1 = (x_{t, I_t})_{t\in B_1}$ and $\tilde X_2 = (x_{t, I_t})_{t\in B_2}$. We also have $\tilde y = (\tilde y_1^T, \tilde y_2^T)^T$, $\tilde X = (\tilde X_1^T, \tilde X_2^T)^T$. We assert there is a changepoint if there exists a $t' \in (c_j, t]$ such that
\begin{equation*}
    Z_{t,t'}^2 := \| \tilde X_1(\hat\theta_1-\hat\theta) \|^2 + \| \tilde X_2 (\hat \theta_2 - \hat\theta) \|^2 \geq Cp\log T.
\end{equation*}
Here $\hat\theta_1, \hat\theta_2, \hat\theta$ is the same defined in Equation \ref{eq:theta_one}, \ref{eq:theta_two}, \ref{eq:theta}.
For joint linear model, we still have similar regret bounds.

\begin{thm}
Consider adjacent stationary periods $j,j+1$ of length $S_{j}$ and $S_{j+1}$ respectively, true parameter $\theta$ changes from $\theta_{j}$ to $\theta_{j+1}$. In these two stationary periods, there are two preselected sets $B_1$ and $B_2$ for changepoint detection only. Define $\delta_{j}^2 = \frac{1}{|B_1|+|B_2|}\sum_{t\in B_1 \cup B_2} \|x_{t,I_t}^T (\theta_{j} -\theta_{j+1})\|^2$. If there exists a constant $C_\xi$ depending only on $\xi$ in Assumption \ref{length_context} only, such that 
\begin{align*}
|B_1| \delta_{j}^2  &\geq 4C_\xi p \log T, \\
|B_2| \delta_{j}^2  &\geq 4C_\xi p \log T.
\end{align*}
Denote $\underline\delta^2 = \min_{j=1}^{D-1} \delta_{j}^2$ and $\Delta = \displaystyle{\max_{t=1}^T \max_{i \neq a_t}} \epsilon_{t,i}$. 
Under Assumption \ref{sub_g}, \ref{length_context}, 
the regret bound of Multiscale-LinUCB for joint linear models satisfies the following:
\begin{equation*}
\resizebox{.99\linewidth}{!}{$
    \displaystyle
    R(T) \leq   \frac{4D C_{\xi} p\sqrt{T\log T}\Delta}{\underline\delta^2}  + 2 D R_{\text{LinUCB}}(S_{\max})  + \Delta \sqrt{T\log T}.
$}
\end{equation*}%
\end{thm}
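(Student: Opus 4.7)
The plan is to mirror the reasoning that yields Theorem~\ref{regret_context} in the disjoint case, noting that the joint model is actually simpler since a single parameter $\theta_j$ governs the reward of every arm inside the $j$-th stationary period, and only one preselected set $\bar B$ (rather than $K$ of them) is required. I would proceed in three stages: port the changepoint-detection guarantees to the joint setting, decompose the regret on the resulting high-probability event, and bound the bad-event contribution.

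For the first stage, I would port Lemmas~\ref{delay_context}, \ref{Lij_delay_context}, and \ref{false_context} essentially verbatim. With $\tilde y_k = (y_{t,I_t})_{t\in B_k}$ and $\tilde X_k = (x_{t,I_t})_{t\in B_k}$, the identity $\tilde y = \tilde X \theta_j + \epsilon$ still holds inside a stationary region with a single $\theta_j$, so the projection decomposition $Z_{t,t'} = \|(P'-P)\tilde y\|$ and the Hanson--Wright tail bound $\mathbbm{P}\{\epsilon^\top(P'-P)\epsilon \ge p + 2\sqrt{pu'} + 2u'\} \le e^{-u'}$ transfer unchanged. Setting $u' = 3\log T$ (no $\log K$ factor is needed since there is only one detector in the joint setting) controls the false-alarm probability by $1/T$ uniformly over all cuts $t'$ in every stationary region. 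For a true changepoint, the theorem's hypothesis $|B_k|\delta_j^2 \ge 4 C_\xi p \log T$ combined with $\|\tilde X(\theta_j - \theta_{j+1})\|^2 = (|B_1|+|B_2|)\delta_j^2$ implies the multiscale detection condition $|B_1||B_2|/(|B_1|+|B_2|)^2 \cdot \|\tilde X(\theta_j - \theta_{j+1})\|^2 \ge C_\xi p \log T$ of the analog of Lemma~\ref{delay_context}. Translating from $|B_k|$ back to raw time via $|B_k| \approx \alpha S_k = S_k\sqrt{\log T / T}$ then yields the detection-delay bound $L_j = 4 C_\xi p\sqrt{T\log T}/\delta_j^2$ with probability at least $1 - 1/T$ per changepoint.

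For the second stage, I would condition on the high-probability event that every true changepoint is detected within $L_j$ rounds and no false alarm fires within any stationary region, and split the cumulative regret into three pieces. The $\lceil \alpha T\rceil$ preselected exploration rounds each cost at most $\Delta$, giving at most $\Delta\sqrt{T\log T}$; this is the single term that differs from the disjoint bound by the factor $K$, precisely because only one preselected set is used rather than one per arm. LinUCB run inside the $D$ stationary windows contributes at most $D R_{\text{LinUCB}}(S_{\max})$, and the conservative factor $2$ absorbs the additional LinUCB-style regret incurred between a true changepoint and its detection, during which LinUCB has not yet been reset. Each of the $D$ detection delays lasts at most $L_j$ rounds and costs at most $\Delta$ per round, contributing $\sum_j L_j \Delta \le 4 D C_\xi p \sqrt{T\log T}\,\Delta / \underline\delta^2$. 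Summing the three pieces gives the stated bound, and the complementary bad event has probability at most $2/T$ and so contributes at most $(2/T)\cdot T\Delta = O(\Delta)$, which is absorbed into lower-order terms.

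The main obstacle is a measurability subtlety in the first stage: the rows $x_{t,I_t}$ of the design matrices depend on the adaptive choice $I_t$, and at a preselected round the theorem does not specify which arm is pulled. For Hanson--Wright to apply we need the sub-Gaussian noise on those rows to be conditionally mean-zero given the sequence of pulled arms and their contexts. The cleanest resolution, consistent with the paper's own caveat that ``our analysis is only valid when the preselected rounds can be considered fixed,'' is to specify an exogenous arm-selection rule on $\bar B$ (round-robin, or an independent uniform draw) so that the filtration generated by the selected design rows is independent of the reward noise. Under Assumption~\ref{length_context}, the Gram-matrix conditioning is then preserved and the concentration arguments behind Lemmas~\ref{delay_context} and \ref{false_context} carry over unchanged, completing the proof.
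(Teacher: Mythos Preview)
Your proposal is correct and mirrors the paper's approach: the paper does not give a separate proof for the joint-model theorem, treating it as a direct analog of Theorem~\ref{regret_context}, and your reduction to the disjoint-case lemmas with the single change $K\alpha T\Delta \to \alpha T\Delta$ is exactly the intended argument. One small discrepancy: in the paper's decomposition the factor $2$ on $D\,R_{\text{LinUCB}}(S_{\max})$ arises from the false-alarm term $R_{\text{false}}(T)\le \sum_j R_{\text{LinUCB}}(S_j)$ (at most one spurious reset per stationary period), not from LinUCB-style regret during the detection delay, which is already fully charged to the $\sum_j L_j\Delta$ term; your accounting is still valid but the attribution differs from the paper's.
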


\subsection{Non-stationary Multi-armed Bandit (MAB)}
There are plenty of literature on non-stationary multi-armed bandit problems \cite{dis-sw,cusum,exp3r,exp,peter_new,mucb,besbes2014stochastic}. 
Most of the notations remain the same as in Section \ref{setup}. However, we don't have a model parameter $\theta$ now. For MAB setting, the algorithm can be simplified a lot. We don't need to randomly preselect rounds for changepoint detection purpose. Instead, at time $t$, we will now randomly select each arm with probability $\sqrt{\frac{\log T}{T}}$, and we will pull arm with the maximum UCB index with probability $1-K\sqrt{\frac{\log T}{T}}$. Define $B_1 = [ c_{i,\gamma}, t') \cap \{\text{time arm } i \text{ is pulled}\}$ and $B_2 = [t', t] \cap \{\text{time arm } i \text{ is pulled}\}.$ 
We calculate the test statistics $Z_{i,t,t^{\prime}}$ as follows.
\begin{equation}
 \label{test_stat}
    Z_{i,t,t^{\prime}} = \textstyle \frac{\sqrt{|B_1||B_2|}}{\sqrt{|B_1|+|B_2|}} \displaystyle  \left[ \frac{\sum_{s\in B_1} y_{s,i}}{|B_1|}  - \frac{\sum_{s\in B_2} y_{s,i}}{|B_2|} \right].
\end{equation}
If there exists a cut point $t' \in (c_{i,\gamma}, t]$ such that $Z_{i,t,t^{\prime}}^2 \geq 6\log T$, then we reset the most recent changepoint as current time $t$, we also reset the UCB index for arm $i$. Otherwise, we assert there is no changepoint in interval 
$[c_{i,\gamma}, t]$ and keep runing UCB.

Define $\delta_{i,j} = |\mu_{c_{i,j}-1,i} - \mu_{c_{i,j},i}|$ to be the change in reward of arm $i$ at 
changepoint $c_{i,j}$. Define $\delta = \min_{ \{(i,j): \delta_{i,j}\neq 0 \} } \delta_{i,j}$, $\underline \epsilon = \min_{t=1}^{T} \min_{i\neq a_t} \epsilon_{t,i},$ and $\bar \epsilon = \max_{t=1}^{T} \max_{i\neq a_t} \epsilon_{t,i}$. Without loss of generality, we can assume $\delta_{i,j} \leq M$ for some constant $M>0$ for all $i,j$.
We provide an analog of Assumption \ref{ass_context} and 
regret analysis in MAB setting.

\begin{assum}\label{gap}
\textbf{(Detectability)}
$S_{i,j}$ and $S_{i,j+1}$ are the length of two adjacent stationary periods for arm $i$,
for all $j \in \{1,\dots, {\gamma_i-1} \},$  we assume there exists a constant $C^{\prime} \geq \max(24\times (1+\frac{1}{\sqrt{3}})^2, 8M^2)$, such that,
\begin{equation*}
  \delta_{i,j}^2  \min (S_{i,j}, S_{i,j+1})  \geq C^{\prime} \sqrt{T\log T}.
\end{equation*}
\end{assum}

\begin{figure*}[tb]
  \begin{center}
    \includegraphics[width=0.5\columnwidth]{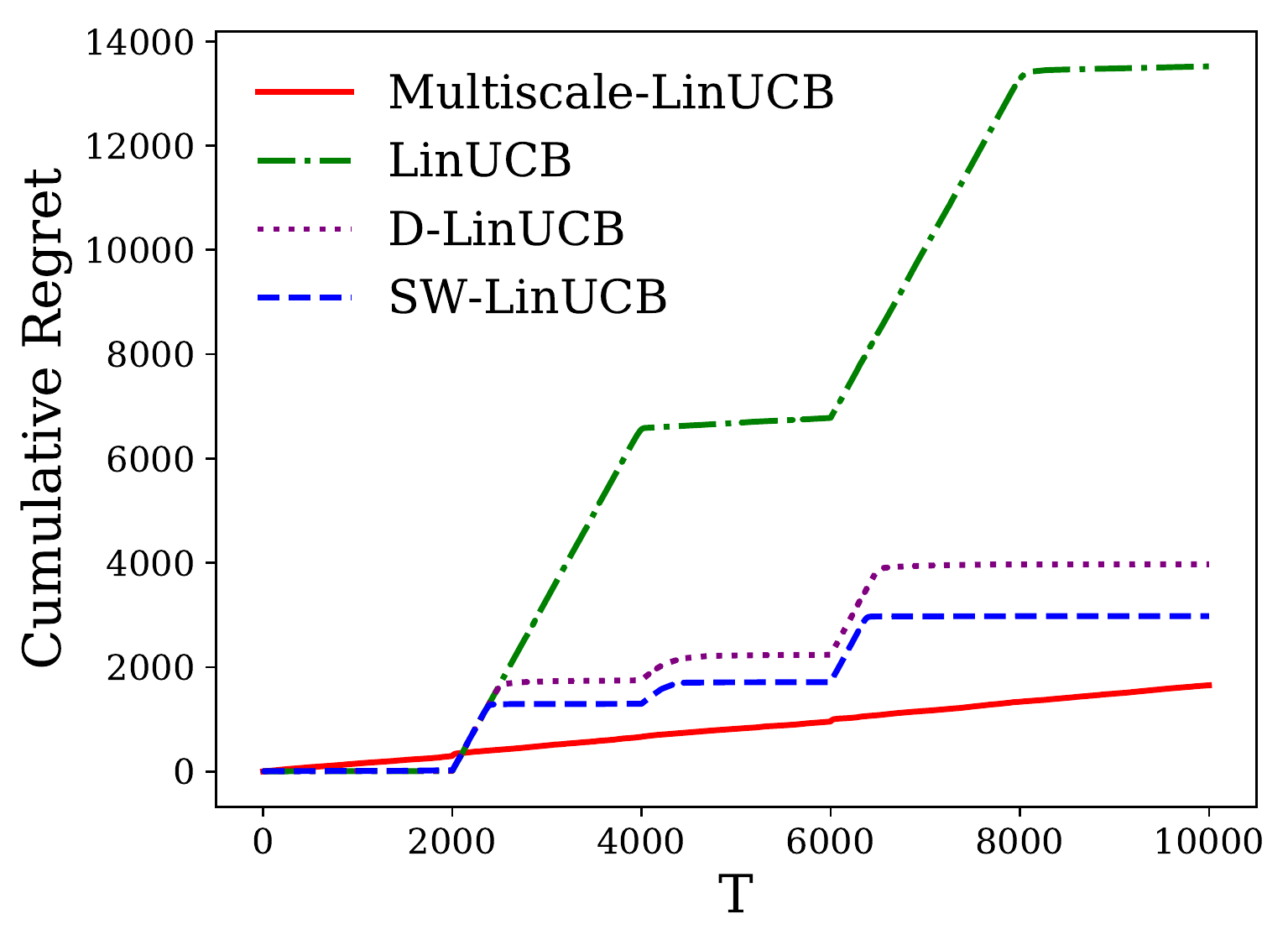} \includegraphics[width=0.5\columnwidth]{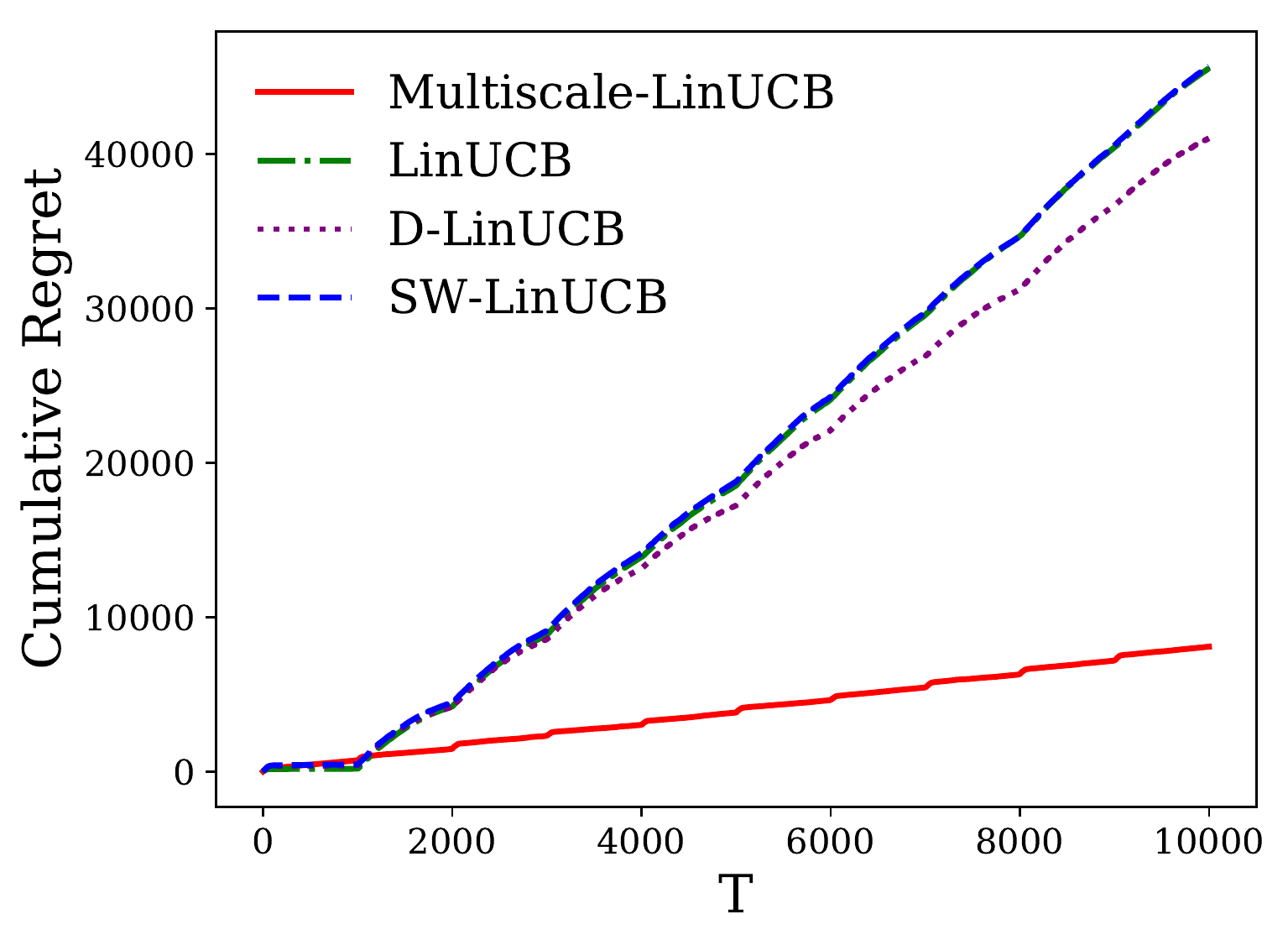}
    \includegraphics[width=0.5\columnwidth]{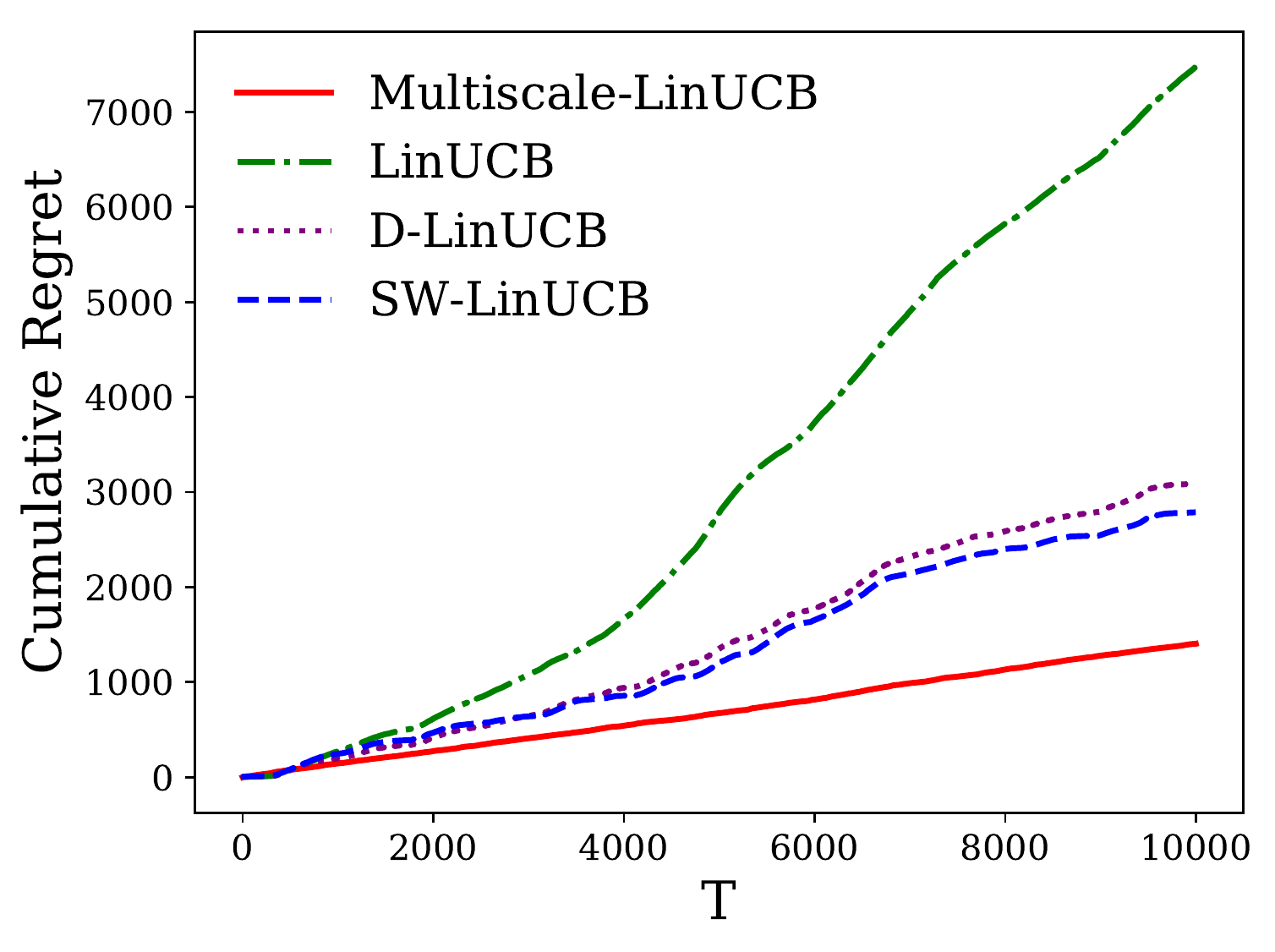}
    \includegraphics[width=0.5\columnwidth]{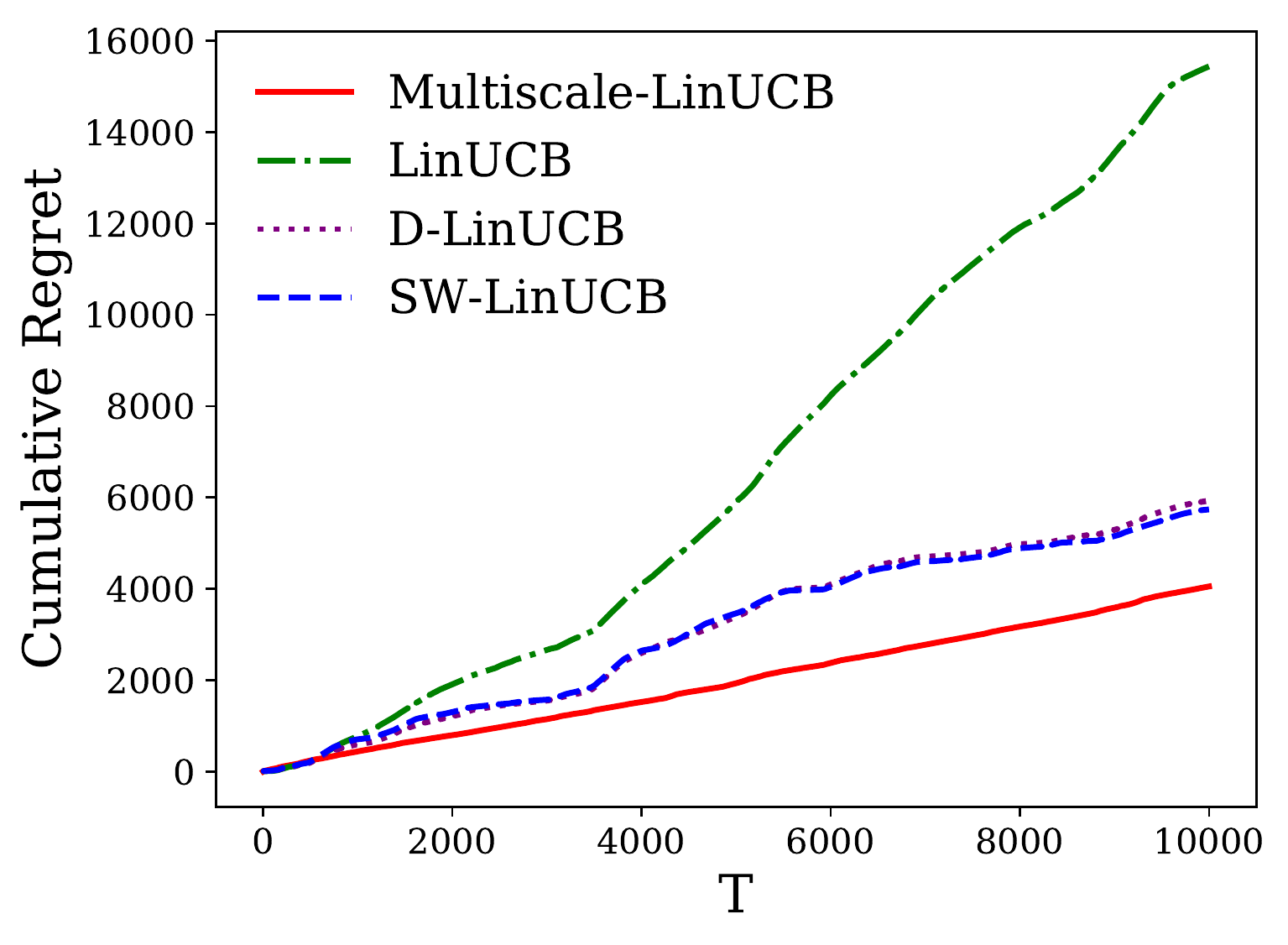}
  \end{center}
  \vskip -0.2in
  \caption{Starting from left to right are plots for Scenario 1, 2, 3, 4 respectively (SW-LinUCB and LinUCB are almost overlapped in the second plot).}
  \label{main_res}
\end{figure*}

\begin{thm}\label{regret}
Under Assumption \ref{sub_g}, \ref{gap}, the regret of our proposed algorithm in non-stationary MAB setting satisfies the following with probability at least $1-\frac{4}{T}$, 
\begin{align*}\label{reg_bound_mab}
    R(T) 
    &\leq D [2 R_{\text{UCB}}(S_{\max}) + \frac{2C^{\prime} \sqrt{T\log T} }{\max\left(\underline\epsilon, \delta \right)}] + \bar \epsilon  K \sqrt{T\log T} \nonumber \\
    &= O\left( \frac{D \sqrt{T\log T}}{\max\left(\underline\epsilon, \delta \right)} + \bar\epsilon K\sqrt{T\log T}\right).
\end{align*}
\end{thm}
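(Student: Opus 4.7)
The plan is to mirror the structure of Theorem~\ref{regret_context} in the simpler MAB setting. I would decompose the cumulative regret into four contributions: (a) the random exploration rounds, (b) the UCB rounds inside correctly-identified stationary windows, (c) the detection delay after each true changepoint, and (d) any regret on false-alarm segments; then union-bound four bad events (a false alarm, a missed detection, a failure of the exploration count to concentrate, and a failure of UCB's in-segment concentration) to obtain the stated $1-4/T$ confidence. For (a), the per-round probability of a random pull is $K\sqrt{\log T/T}$, so a Chernoff bound gives that the number of random pulls is at most $O(K\sqrt{T\log T})$ with probability at least $1-1/T$, contributing $\bar\epsilon K\sqrt{T\log T}$ to the regret.

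The false-alarm bound is the MAB analog of Lemma~\ref{false_context}. Under stationarity, the statistic $Z_{i,t,t'}$ in (\ref{test_stat}) is a properly rescaled two-sample mean difference; because rewards are $1$-sub-Gaussian (Assumption~\ref{sub_g}), $Z_{i,t,t'}$ is itself mean-zero and $1$-sub-Gaussian, so $\mathbbm{P}(Z_{i,t,t'}^2\ge 6\log T)\le 2e^{-3\log T}$, and a union bound over the at most $KT^2$ triples $(i,t,t')$ bounds the total false-alarm probability by $1/T$.

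The detection-delay argument---the MAB analog of Lemma~\ref{delay_context}---is the main obstacle and drives the specific numeric lower bound on $C'$ in Assumption~\ref{gap}. At the oracle cut $t'=c_{i,j}$, the conditional mean of $Z^{2}_{i,t,t'}$ equals $|B_1|\,|B_2|\,\delta_{i,j}^{2}/(|B_1|+|B_2|)$. A Bernstein bound on the Bernoulli exploration process shows that in any calendar window of length $L$ around the changepoint, both $|B_1|$ and $|B_2|$ concentrate near $L\sqrt{\log T/T}$ with probability $1-1/T$. Plugging $L=L_{i,j}:=C'\sqrt{T\log T}/\delta_{i,j}^{2}$, Assumption~\ref{gap} guarantees $L_{i,j}\le\min(S_{i,j},S_{i,j+1})$ so the required post-change samples actually accumulate inside the next stationary window; and the explicit constants $C'\ge 24(1+1/\sqrt{3})^{2}$ and $C'\ge 8M^{2}$ are calibrated so that the signal term clears the $6\log T$ threshold plus the sub-Gaussian fluctuation uniformly in $t'$.

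On this clean event the horizon is partitioned into at most $2D$ stationary sub-intervals of length at most $S_{\max}$, giving $2DR_{\text{UCB}}(S_{\max})$ for (b); for (c), each of the $D$ changepoints incurs at most $L_{i,j}$ suboptimal rounds, and a careful accounting of per-round delay regret using the identity $L_{i,j}\delta_{i,j}^{2}=C'\sqrt{T\log T}$ yields the $\frac{2DC'\sqrt{T\log T}}{\max(\underline\epsilon,\delta)}$ term after summing over changepoints. Adding (a), noting (d) is empty on the clean event, and union-bounding the four failure events completes the proof with probability at least $1-4/T$.
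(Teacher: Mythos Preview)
Your overall decomposition matches the paper's proof of Theorem~\ref{regret}: the paper also splits $R(T)$ into baseline UCB regret over the $D$ stationary segments, the random-exploration cost $\sum_{t}\sum_i \alpha(\mu_{t,*}-\mu_{t,i})\le \bar\epsilon K\sqrt{T\log T}$, a false-alarm term, and a delay term; and the $1-4/T$ arises from a $1/T$ false-alarm bound (your sub-Gaussian tail plus union bound is exactly their Lemma~\ref{prob_false_alarm}) together with a $3/T$ bound on the delay event. Two points, however, deserve attention.

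First, the factor $2$ in $2DR_{\text{UCB}}(S_{\max})$ does not come from ``at most $2D$ stationary sub-intervals on the clean event'' as you write. On the clean event there are no false alarms, so the algorithm runs UCB on exactly $D$ detected segments, not $2D$. In the paper the second copy of $\sum_j R_{\text{UCB}}(S_j)$ is simply the (trivially valid) upper bound chosen for $R_{\text{false}}(T)$ on the good event; you should either adopt that bookkeeping or tighten the statement.

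Second, and more substantively, your delay-regret step skips the actual content. The identity $L_{i,j}\delta_{i,j}^2=C'\sqrt{T\log T}$ only relates the delay of \emph{arm $i$'s} detector to \emph{arm $i$'s} jump size; it does not by itself bound (delay length)$\times$(per-round regret), because the per-round regret during the delay is $\mu'_{a_t}-\mu'_{I_t}$, not $\delta_{i,j}$. The paper's argument is a case analysis on the old and new optimal arms (call them $1$ and $2$): it shows (i) if every arm's jump is $\le \tfrac12\underline\epsilon$ the optimal arm cannot switch and the delay incurs no regret; (ii) otherwise, distinguishing whether $\mu_1'<\mu_2$ or $\mu_1'\ge \mu_2$ and whether $\delta_{1,j}\lessgtr\delta_{2,j}$, the per-round delay regret is at most $2\max(\delta_{1,j},\delta_{2,j})$ and it suffices to wait for the detector of whichever of arms $1,2$ has the larger jump, whose delay is $C'\sqrt{T\log T}/\max(\delta_{1,j},\delta_{2,j})^2$; and (iii) in each case that larger jump is at least $\underline\epsilon$. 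This is precisely what produces the $\max(\underline\epsilon,\delta)$ in the denominator, and it is the step your proposal is missing.
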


\begin{remark}
For our proposed algorithm in non-stationary MAB setting, we don't need any input of unknown information like total number of changepoints, which is a big advantage over many existing non-stationary MAB algorithms \cite{cusum,dis-sw,besbes2014stochastic}. Moreover, our algorithm can achieve (nearly) optimal regret bound \cite{dis-sw}.
\end{remark}

\section{Experimental Results}\label{exps}
In Algorithm \ref{alg:cp}, although the algorithm breaks at time $t^{\prime}$, which is a cut point of interval $[c_{i,\gamma}, t]$, the returned changepoint is the current time $t$. However, we found that reusing information 
during $[t^{\prime}, t]$ is helpful for reducing cumulative regret. Therefore, in the experiments below, we use $t^{\prime}$ as the detected changepoints instead of $t$. 
We compare our algorithm with state-of-the-art algorithms including Sliding Window LinUCB (SW-LinUCB) \cite{cheung2018learning}, D-LinUCB \cite{russac2019weighted} and LinUCB \cite{linucb}. We omit the comparison with Dynamic Linear UCB (dLinUCB) here since that it is shown by Russac et al. \shortcite{russac2019weighted} in their experiments that dLinUCB performs much worse than D-LinUCB, and even worse than LinUCB in many simulations, which is also the case in our experiments.

For Multiscale-LinUCB, although input $C$ needs to be chosen based on $\xi$ in order to achieve our regret bound in the analysis, we found that in most experiments, choosing $C = \frac{1}{\log T} (1+2\sqrt{\frac{3\log T + \log K}{p}} + \frac{6\log T + 2\log K}{p})$ is enough. For both SW-LinUCB \cite{cheung2018learning} and D-LinUCB \cite{russac2019weighted}, the algorithm needs to know $B_T$ which is an upper bound on $\sum_{t=1}^{T-1} \|\theta_{t+1} - \theta_t\|$. Here $\theta_t$ is the true model parameter at time $t$. However, in practice, it's often the case that $B_T$ is unknown. 
The authors of SW-LinUCB \cite{cheung2018learning} suggest using $B_T=1$ when $B_T$ is unknown, so we use $B_T=1$ in the comparisons.

All the experiments shown here is for non-stationary contextual bandit with joint linear model, since SW-LinUCB and D-LinUCB is proposed for joint linear models. For all experiments, we fix $T=10,000$ and draw sample reward from $N(\mu_{t,i}, 1)$, where $\mu_{t,i}$ is the mean reward for arm $i$ at time $t$. Feature vectors $x_{t,i}$ are drawn from $U(0,10)$ randomly. In each stationary period, the true model parameter $\theta$ is drawn from $U(-1,1)$, except the first scenario.
We repeat the experiments $10$ times and plot the average regret of these experiments. We demonstrate the success of Multiscale-LinUCB under the scenarios below. Note that for both Scenario 1 and 2, if you zoom in and look closely at the plots, you will find that the regret of Multiscale-LinUCB accumulates at a faster rate at changepoints. Immediately after the changepoints, the regret accumulates much slower which shows that our algorithm captures the change very quickly and adapts well to the changing environment. Details can be found in Figure \ref{main_res}.

\begin{enumerate}
    \item \textbf{Scenario 1 (Detectable environments)}: This is a similar setting as in \cite{russac2019weighted}. Before $t=2000$, $\theta = (1,0)$; for $t\in [2000, 4000]$, $\theta = (-1,0)$; for $t\in [4000, 6000]$, $\theta = (0,1)$; for $t>6000$, $\theta = (0,-1)$. We set $K=2$ and $p=2$ here. From the plot in Figure \ref{main_res}, we can see that LinUCB cannot adpat to abruptly changing environment. SW-LinUCB and D-LinUCB presents similar behavior when there is an abrupt changepoint. Both algorithms incur fairly large regret for some rounds right after the changepoint. However, Multiscale-LinUCB can adapt to the change faster and therefore achieve smaller regret.
    \item \textbf{Scenario 2 (High dimensions)}: There are in total $10$ changepoints and they are evenly spread over the whole time horizon. We set $K=2$ and $p=50$ here. In the experiments of \cite{russac2019weighted}, it was shown that under high dimensions ($p=50$), D-LinUCB can perform well. We can see from Figure \ref{main_res} that Multiscale-LinUCB adapts to changes much faster and performs much better than all other algorithms under high dimensions.
    \item \textbf{Scenario 3 (Random changepoints)}: At time $t$, $\theta$ changes with probability $\frac{10}{T}$. We set $K=2$ and $p=2$. Although we require each stationary period to be long enough in Assumption \ref{ass_context} and \ref{length_context}, we show here that even when the changepoints are randomly distributed over the whole time horizon, where Assumption \ref{ass_context}, \ref{length_context} could be violated, Multiscale-LinUCB still performs quite well.
    \item \textbf{Scenario 4 (Multiple arms)}: At each time $t$, $\theta$ will change with probability $\frac{10}{T}$. We set $K=4$ and $p=2$ here. We show by this scenario that Multiscale-LinUCB can work well with multiple arms. We found that the regret of every algorithm roughly scales linearly with the number arms, although the regret analysis for D-LinUCB and SW-LinUCB shows that their regret upper bound is invariant to $K$ \cite{russac2019weighted}.
\end{enumerate}

\section{Conclusion}
We proposed a multiscale changepoint detection based LinUCB algorithm for non-stationary stochastic disjoint linear bandit setting, Multiscale-LinUCB. We also extended it to non-stationary joint linear bandit setting and MAB setting. 
The regret of our proposed algorithm matches the lower bound up to a logarithm factor. Particularly, our algorithm can also deal with faint change in mean reward. Experimental results show that our proposed algorithm outperforms other state-of-the-art algorithms significantly in non-stationary environments.

\newpage

\bibliographystyle{named}
\bibliography{ijcai20}

\newpage

\appendix

\section{Proofs for Non-stationary Contextual Bandit Setting}

\subsection{Proof of Lemma \ref{delay_context}}
\begin{proof}
For ease of notation let $\theta_1 = \theta_{i,j}, \theta_2 = \theta_{i,j+1}$.
Consider the test statistic defined in \eqref{eq:context_stat},
\[
Z_{i,t,t'}^2 = \left\| \begin{array}{c} \tilde X_1 (\hat \theta_1 - \hat \theta) \\ \tilde X_1 (\hat \theta_2 - \hat \theta) \end{array} \right\|^2.
\]
By standard OLS theory, the vector $(\tilde X_1 \hat \theta_1, \tilde X_2 \hat \theta_2)$ is the projection of $\tilde y$ onto the column space of the following matrix,
\[
\left( \begin{array}{cc}
    \tilde X_1 & 0 \\
    0 & \tilde X_2
\end{array} \right).
\]
Let's call this projection $P'$, and let $P$ be the projection onto the column space of $\tilde X$.
Then 
\[
\left\| \begin{array}{c} \tilde X_1 (\hat \theta_1 - \hat \theta) \\ \tilde X_1 (\hat \theta_2 - \hat \theta) \end{array} \right\|^2 = \|(P' - P) \tilde y\|^2.
\]
Notice that these column spaces are nested, so that $P' - P$ is the projection onto a subspace orthogonal to the column space of $\tilde X$.
Let $\tilde y^\top = ((\tilde X_1 \theta_1)^\top, (\tilde X_2 \theta_2)^\top) + \epsilon^\top$ for zero-mean subGaussian(1) vector, $\epsilon$.
By the triangle inequality, we have that
\[
\|(P' - P) \tilde y\| \ge \left\| (P' - P) \left( \begin{array}{c}
     \tilde X_1 \theta_1  \\
     \tilde X_2 \theta_2 
\end{array} \right)
\right\| - \| (P' - P) \epsilon \|.
\]
Let us begin by lower bounding the first term on the RHS.
Notice that for any vector $\theta' \in \mathbbm R^p$ we have that 
\[
\left\| (P' - P) \left( \begin{array}{c}
     \tilde X_1 \theta_1  \\
     \tilde X_2 \theta_2 
\end{array} \right)
\right\|
= \left\| (P' - P) \left( \begin{array}{c}
     \tilde X_1 (\theta_1 - \theta')  \\
     \tilde X_2 (\theta_2 - \theta')
\end{array} \right)
\right\|,
\]
since $P \tilde X \theta' = P' \tilde X \theta'$.
Let 
\[
\theta' = \zeta \theta_1 + (1 - \zeta) \theta_2,
\]
where $\zeta = |B_1| / (|B_1| + |B_2|)$ and denote $\delta = \theta_1 - \theta_2$.
Hence,
\[
\left\| (P' - P) \left( \begin{array}{c}
     \tilde X_1 \theta_1  \\
     \tilde X_2 \theta_2 
\end{array} \right)
\right\| 
=
\left\| (P' - P) \left( \begin{array}{c}
     (1 - \zeta) \tilde X_1 \delta  \\
     - \zeta \tilde X_2 \delta 
\end{array} \right)
\right\|
\]
Because, the projections are into nested subspaces, and the vector in question is within the outer subspace, we have that
\begin{eqnarray*}
& & \left\| (P' - P) \left( \begin{array}{c}
     (1 - \zeta) \tilde X_1 \delta  \\
     - \zeta \tilde X_2 \delta 
\end{array} \right)
\right\|^2 \\
&=&
\left\| \left( \begin{array}{c}
     (1 - \zeta) \tilde X_1 \delta  \\
     - \zeta \tilde X_2 \delta 
\end{array} \right)
\right\|^2 
-
\left\| P \left( \begin{array}{c}
     (1 - \zeta) \tilde X_1 \delta  \\
     - \zeta \tilde X_2 \delta 
\end{array} \right)
\right\|^2.
\end{eqnarray*}
The first term can be written as,
\begin{eqnarray*}
& & \left\| \left( \begin{array}{c}
     (1 - \zeta) \tilde X_1 \delta  \\
     - \zeta \tilde X_2 \delta 
\end{array} \right)
\right\|^2 \\
&=&  (1 - \zeta)^2 \delta^\top \tilde X_1^\top \tilde X_1 \delta + \zeta^2 \delta^\top \tilde X_2^\top \tilde X_2 \delta \\
&=& \frac{|B_1| \cdot |B_2|}{(|B_1| + |B_2|)^2} \delta^\top \tilde X^\top \tilde X \delta
\end{eqnarray*}
The second term can be written as,
\begin{eqnarray*}
& & \left\| P \left( \begin{array}{c}
     (1 - \zeta) \tilde X_1 \delta  \\
     - \zeta \tilde X_2 \delta 
\end{array} \right)
\right\|   \\
&=&
\| \tilde X (\tilde X^\top \tilde X)^{-1} \left( (1 - \zeta)\tilde X_1^\top \tilde X_1 \delta - \zeta \tilde X_2^\top \tilde X_2 \delta \right)\|.
\end{eqnarray*}
Notice that 
\begin{equation*}
(1 - \zeta)\tilde X_1^\top \tilde X_1 \delta - \zeta \tilde X_2^\top \tilde X_2 \delta = \frac{|B_1||B_2|}{|B_1| + |B_2|} (\tilde \Sigma_1 - \tilde \Sigma_2) \delta.
\end{equation*}
Thus,
\begin{eqnarray*}
& & \left\| P \left( \begin{array}{c}
     (1 - \zeta) \tilde X_1 \delta  \\
     - \zeta \tilde X_2 \delta 
\end{array} \right)
\right\|^2 \\
&=&
\frac{(|B_1||B_2|)^2}{(|B_1| + |B_2|)^2} \delta^\top (\tilde \Sigma_1 - \tilde \Sigma_2) (\tilde X^\top \tilde X)^{-1} (\tilde \Sigma_1 - \tilde \Sigma_2) \delta.
\end{eqnarray*}
Notice that $\sqrt{|B_1||B_2|}/(|B_1| + |B_2|) = \sqrt{\zeta (1- \zeta)}$ and
\begin{eqnarray*}
\sqrt{\zeta (1- \zeta)} (\tilde \Sigma_1 - \tilde \Sigma_2 ) \preceq (\xi - 1) \sqrt{\zeta (1- \zeta)}  \tilde \Sigma_2.
\end{eqnarray*}
Moreover, $\sqrt{\zeta (1- \zeta)} \le 1/2$ and
\begin{eqnarray*}
& & (\xi - 1) \sqrt{\zeta (1- \zeta)}  \tilde \Sigma_2 \\
&\preceq & \frac{(\xi - 1)}{2} (\zeta \tilde \Sigma_2 + (1-\zeta) \tilde \Sigma_2 ) \\
&\preceq & \frac{\xi (\xi - 1)}{2} (\zeta \tilde \Sigma_1 + (1-\zeta) \tilde \Sigma_2 ) = \frac{\xi (\xi - 1)}{2} \frac{\tilde X^\top \tilde X}{|B_1| + |B_2|}.
\end{eqnarray*}
The same holds if we switch the roles of $\tilde \Sigma_1, \tilde \Sigma_2$.
Hence,
\begin{equation*}
\left\| P \left( \begin{array}{c}
     (1 - \zeta) \tilde X_1 \delta  \\
     - \zeta \tilde X_2 \delta 
\end{array} \right)
\right\|^2 
\le 
\frac{|B_1||B_2|}{(|B_1| + |B_2|)^2} \frac{\xi (\xi - 1)}{2} \delta^\top \tilde X^\top \tilde X \delta.
\end{equation*}
Hence,
\[
\left\| (P' - P) \left( \begin{array}{c}
     \tilde X_1 \theta_1  \\
     \tilde X_2 \theta_2 
\end{array} \right)
\right\| \ge \Xi \frac{|B_1||B_2|}{(|B_1| + |B_2|)^2} \| \tilde X \delta \|^2,
\]
where $\Xi = 1 - \frac{\xi (\xi - 1)}{2} > 0$.
We will now control the noise term, $\| (P' - P) \epsilon \|^2$.
Due to idempotency,
\[
\| (P' - P) \epsilon \|^2 = \epsilon^\top (P' - P) \epsilon.
\]
By the Hanson-Wright inequality (the form in \cite{hsu2012tail} is sufficient for our purposes),
\[
\mathbbm P \left\{ \epsilon^\top (P' - P) \epsilon \ge p + 2 \sqrt{pu^{\prime}} + 2u^{\prime}  \right\} \le e^{-u^{\prime}},
\]
for $u^{\prime}>0$.
To see this note that ${\rm tr}(P'-P) = p$ (by Assumption \ref{length_context}) and $\| P' - P \| = 1$.
So we detect the changepoint with probability at least $1-e^{-u^{\prime}}$ as long as 
\[
\frac{|B_1||B_2|}{(|B_1| + |B_2|)^2} \| \tilde X \delta \|^2 \ge C p u^{\prime} > \Xi^{-1} (p + 2 \sqrt{pu^{\prime}} + 2u^{\prime}),
\]
for $u^{\prime} > 1$ and some constant $C$ depending on $\Xi$ only.
We can set $u^{\prime} = 3 \log T + \log K$ and apply the union bound to obtain our desired result.
\end{proof}

\subsection{Proof of Lemma \ref{Lij_delay_context}}
Before we prove Lemma \ref{Lij_delay_context}, we first state our Lemma \ref{length} below. Then we will use it to show that our detection delay is bounded from above.

We are uniformly sampling each arm with a small probability $\alpha$ at each time, therefore, when the stationary period is long enough, we can get sufficient samples to estimate the expected reward of every arm. This is made precise in the following lemma.

\begin{lem}\label{length}
For any stationary period with length $S$, for any arm $i$, we have
\begin{equation*}
    P\left(\text{\# of times arm i is pulled} < \frac{1}{2} S\alpha \right) \le e^{- \frac{\alpha S}{8}}.
\end{equation*}
\end{lem}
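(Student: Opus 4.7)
The plan is to reduce the claim to a standard multiplicative Chernoff bound on a sum of independent Bernoulli indicators. The count ``\# of times arm $i$ is pulled'' during a window of length $S$ is at least the number of rounds in that window at which arm $i$ was preselected for changepoint detection (with probability $\alpha$ each round in the MAB variant, or placed into $\bar B_i$ in the contextual setting). It therefore suffices to lower bound this preselection count $N_i$, since the event $\{N_i \ge S\alpha/2\}$ implies that arm $i$ is pulled at least $S\alpha/2$ times.

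In the MAB variant, each round independently assigns arm $i$ to the detection pool with probability $\alpha$, so $N_i \sim \mathrm{Binomial}(S,\alpha)$ with mean $\mu = S\alpha$. I would then invoke the lower-tail multiplicative Chernoff bound
\[
P\bigl(N_i \le (1-\delta)\mu\bigr) \le \exp\!\left(-\frac{\delta^2 \mu}{2}\right), \qquad \delta \in (0,1],
\]
and plug in $\delta = 1/2$ to obtain $P(N_i < S\alpha/2) \le \exp(-S\alpha/8)$, matching the stated bound exactly.

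The only subtlety is that in the contextual algorithm $\bar B_i$ is formed by sampling exactly $\lceil \alpha T \rceil$ indices from $\{1,\ldots,T\}$ without replacement, with the sets $\{\bar B_i\}$ constrained to be disjoint, so $N_i = |W \cap \bar B_i|$ is hypergeometric rather than binomial (its mean is $S \lceil \alpha T \rceil / T \ge S\alpha$). This is not a real obstacle: by Hoeffding's classical comparison, the moment generating function of such a hypergeometric variable is dominated by that of the corresponding binomial with the same mean, so the identical Chernoff exponent carries over verbatim. Because the entire argument is a one-line application of a textbook concentration inequality, there is no substantive hard step --- the main thing to be careful about is simply the reduction to the preselection count and the hypergeometric-versus-binomial comparison.
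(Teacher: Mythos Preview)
Your proposal is correct and matches the paper's own proof essentially verbatim: the paper models the pull indicators as independent $\mathrm{Ber}(p_t)$ with $p_t\ge\alpha$, sets $n=\sum p_t\ge S\alpha$, and applies the lower-tail multiplicative Chernoff bound with $\delta=1/2$ to obtain $e^{-n/8}\le e^{-S\alpha/8}$. Your extra remark about the hypergeometric distribution in the contextual variant (where $\bar B_i$ is drawn without replacement) is in fact more careful than the paper, which simply writes the per-round Bernoulli model without addressing that distinction.
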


\begin{proof}
For any time $t$ within this stationary period, arm $i$ is pulled with probability $p_t \geq \alpha = \sqrt{\frac{\log T}{T}}$. Define $N_t \sim Ber(p_t)$, then $\sum_{t=1}^S N_t$ is the number of times arm $i$ is pulled in this stationary period of length S. Define $n= E[\sum_{t=1}^{S} N_t] = \sum_{t=1}^{S} p_t \geq S\alpha.$ 
\begin{eqnarray*}
& & P\left(\sum_{t=1}^{S} N_t \geq \frac{1}{2} S \alpha\right) \\
&=& 1- P\left(\sum_{t=1}^{S} N_t - n < \frac{1}{2} S \alpha - n \right)\\
& \geq & 1- P\left(\sum_{t=1}^{S} N_t - n < -\frac{1}{2} n  \right) \\
&\geq & 1- exp\left(- \frac{1}{8} n\right) \geq 1- exp\left(- \frac{1}{8} S \alpha\right).
\end{eqnarray*}
The last step is obtained from Chernoff bound.
\end{proof}

Now we can formally prove our Lemma \ref{Lij_delay_context}.

\begin{proof}
We firstly show that Assumption \ref{ass_context} implies Equation \ref{important_contextual_assum} in Lemma \ref{delay_context}.
From Lemma \ref{length}, we have the following:
\begin{align*}
    P(|B_1| \geq \frac{1}{2} S_{i,j} \alpha) \geq 1- exp(-\frac{1}{8} S_{i,j} \alpha) ,\\
    P(|B_2| \geq \frac{1}{2} S_{i,j+1} \alpha) \geq 1- exp(-\frac{1}{8} S_{i,j+1} \alpha).
\end{align*}
Note that from Assumption \ref{ass_context}, we have
\begin{equation*}
    S_{i,j} \alpha, S_{i,j+1} \alpha \geq \frac{4p C_{\xi} \log T}{\delta^2_{i,j}}.
\end{equation*}
Therefore, $|B_1|, |B_2| \geq \frac{2 C_{\xi} p \log T}{\delta^2_{i,j}}$ with probability at least $1- exp(-\frac{p C_{\xi} \log T}{2\delta^2_{i,j}})$. So we get with probability $1- exp(-\frac{p C_{\xi} \log T}{2\delta^2_{i,j}})$,
\begin{eqnarray*}
  & & \frac{|B_1|\cdot |B_2|}{(|B_1| + |B_2|)^2} \| \tilde X (\theta_{i,j} - \theta_{i,j+1}) \|^2 \\
  &\geq & \frac{|B_1|\cdot |B_2|}{(|B_1| + |B_2|)} \delta_{i,j}^2 \geq  C_\xi p \log T .
\end{eqnarray*}
 
Now, assume changepoint $c_{i,j-1}$ has already been detected and we are trying to detect changepoint $c_{i,j}$.
From Lemma \ref{delay_context}, we know that if we are at time $c_{i,j} + L_{i,j}$, and $\tilde B_2 = [c_{i,j}, c_{i,j} + L_{i,j}] \cap \bar B_i$ is the pre-sampled time slots for arm $i$, then we will be able to detect changepoint $c_{i,j}$ when $|\tilde B_2| \geq \frac{2 C_{\xi} p \log T}{\delta^2_{i,j}}$. Notice that $L_{i,j}$ here is exactly the detection delay for changepoint $c_{i,j}$.

Again, using Lemma \ref{length}, we know when $L_{i,j} = \frac{4 C_{\xi} p \sqrt{T \log T }}{\delta^2_{i,j}}$,
\begin{eqnarray*}
& & P( |\tilde B_2| \geq \frac{2 C_{\xi} p \log T}{\delta^2_{i,j}} ) =  P(|\tilde B_2| \geq \frac{1}{2} L_{i,j} \alpha ) \\
&\geq & 1- exp(-\frac{1}{8} L_{i,j} \alpha ) =  1- exp(- \frac{ C_{\xi} p  \log T }{2\delta^2_{i,j}} ).
\end{eqnarray*}

When $C_{\xi}p \geq 2\delta^2_{i,j}$ for every $i=1,\dots,K$ and every $j=1,\dots,\gamma_i$, 
then detection delay is $L_{i,j} = \frac{4 C_{\xi} p \sqrt{T \log T }}{\delta^2_{i,j}}$ with probability at least $1-\frac{1}{T}$.
\end{proof}

\subsection{Proof of Theorem \ref{regret_context}}
\begin{proof}
If the changepoints can all be perfectly detected by Algorithm~\ref{alg:cp}, then the total regret $R(T) \leq \sum_{j=1}^{D} R_{\text{LinUCB}}(S_j) + \alpha TK\Delta$, where $R_{\text{UCB}}(S_j)$ is the UCB-regret in the $j$-th stationary period ($S_j$ is the length of this stationary period) and the second term is caused by uniform sampling of the arm $i$ with probability $\alpha$ every time. But if changepoints can't be perfectly detected, there may be false alarms and detection delays. So we have 
\begin{equation*}
    R(T) \leq \sum_{j=1}^D R_{\text{LinUCB}}(S_j) + \sum_{i=1}^K \alpha T \Delta + R_{\text{delay}}(T) + R_{\text{false}}(T),
\end{equation*}
where $S_j$ is the length of the $j$-th stationary period, $R_{\text{false}}(T)$ is the regret caused by false alarms up to time $T$, and $R_{\text{delay}}(T)$ is the regret caused by detection delays up to time $T$.
Firstly, it was shown in \cite{news_contextual} that $R_{\text{LinUCB}}(T) \sim \tilde O(\sqrt{p K T})$.
Secondly, $\sum_{i=1}^K \alpha T \Delta = K \Delta \sqrt{T\log T}$.
Thirdly, from Lemma \ref{false_context}, we have $R_{\text{false}}(T) \leq \sum^{D}_{j=1} R_{\text{LinUCB}}(S_j) \sim \tilde O(\sqrt{p K T})$.
Finally, from Lemma \ref{Lij_delay_context}, we have 
$R_{\text{delay}}(T) \leq \sum^K_{i=1} \sum_{j=1}^{\gamma_i} \Delta L_{i,j} \leq  \frac{4D C_{\xi} p\sqrt{T\log T}\Delta}{\underline\delta^2}$.
Combining the above, we get the conclusion.
\end{proof}

\section{Proof of Proposition \ref{prop:assum4}}\label{assum4}

\begin{proof}
Assume the second moment matrix is $V$.
Since $V$ is positive definite, so
\begin{eqnarray}\label{conclusion}
\xi \tilde \Sigma_2 \succeq \tilde \Sigma_1 \succeq \xi^{-1} \tilde \Sigma_2
\end{eqnarray}
is equivalent to 
\begin{eqnarray*}
  \xi V^{-\frac{1}{2}} \tilde \Sigma_2 V^{-\frac{1}{2}} \succeq  V^{-\frac{1}{2}} \tilde \Sigma_1 V^{-\frac{1}{2}} \succeq \xi^{-1} V^{-\frac{1}{2}} \tilde \Sigma_2 V^{-\frac{1}{2}}. 
\end{eqnarray*}

Notice that $V^{-\frac{1}{2}} \tilde \Sigma_j V^{-\frac{1}{2}} = \frac{1}{|B_j|}  ({ \tilde X_j V^{-\frac{1}{2}} })^T \tilde X_j V^{-\frac{1}{2}}$ and
$\tilde X_j V^{-\frac{1}{2}}$ is a matrix with sub-Gaussian isotropic rows, therefore, from \cite{vershynin2010introduction}, we have that the following with probability at least $1-\frac{2}{T}$
for $j=1,2$,
\begin{eqnarray*}
    & & (1 - q_j)^2 \leq 
    \lambda_{\min} (V^{-\frac{1}{2}} \tilde \Sigma_j V^{-\frac{1}{2}} ) \nonumber\\
    & \leq &
    \lambda_{\max} (V^{-\frac{1}{2}} \tilde \Sigma_j V^{-\frac{1}{2}} ) 
    \leq (1+q_j)^2
\end{eqnarray*}
where $q_j = c_2 \sqrt{\frac{p}{|B_j|}} + \sqrt{\frac{\log T}{c_1 |B_j|}}$. Here $c_1,c_2$ are two positive constants depending on the sub-Gaussian norm of row vectors only. $\lambda_{\min} (\cdot)$ and $\lambda_{\max} (\cdot)$ represents the minimum and maximum eigenvalue of a matrix respectively.

When the consecutive stationary periods are of length $S_1, S_2$ respectively and $S_1, S_2 =\omega(\sqrt{T\log T})$, we have 
$|B_1|, |B_2| \gg \log T$ with probability at least $1-\frac{1}{T}$. Assumption \ref{length_context} requires $1<\xi < 2$. 
By applying a union bound, we have with probability at least $1-\frac{3}{T}$,
we have
\begin{equation*}
  \frac{ \lambda_{\max} (V^{-\frac{1}{2}} \tilde \Sigma_1 V^{-\frac{1}{2}} ) }{ \lambda_{\min} (V^{-\frac{1}{2}} \tilde \Sigma_2 V^{-\frac{1}{2}} ) } \leq 
  \frac{(1+q_1)^2}{(1-q_2)^2} \to 1 < 2.
\end{equation*}
Similarly, we have with probability at least $1-\frac{3}{T}$,
\begin{equation*}
  \frac{ \lambda_{\max} (V^{-\frac{1}{2}} \tilde \Sigma_2 V^{-\frac{1}{2}} ) }{ \lambda_{\min} (V^{-\frac{1}{2}} \tilde \Sigma_1 V^{-\frac{1}{2}} ) } \leq 
  \frac{(1+q_2)^2}{(1-q_1)^2} \to 1 < 2.
\end{equation*}
So we get Assumption \ref{length_context} holds with probability greater than $1-\frac{3}{T}$.
\end{proof}

\section{Non-stationary Multi-armed Bandit Setting}
We present the extension of our algorithm to non-stationary MAB setting below.
\begin{align}
    C_t(i) &= \sum_{s=c_{i,\gamma}}^t \mathbbm{1}_{\{I_s = i\}},  \label{ucb_eq1}\\
    \bar Y_t(i) &= \frac{1}{C_t(i)} \sum_{s=c_{i,\gamma}}^t Y_{s,i} \mathbbm{1}_{\{I_s = i\}}. \label{ucb_eq2} \\
    I_t &= \left\{ \begin{array}{ll}
    \argmax \bar Y_t(i) + \sqrt{\frac{2 \log t}{C_t(i)}}, &\text{w.p. } 1-K \alpha \\
    \text{uniform draw from } \mathcal{K}, &\text{w.p. } K \alpha.
    \end{array}  \right.  \label{ucb_eq3}
\end{align} 

The analysis of regret bound for non-stationary MAB setting follows similar ways as non-stationary contextual setting. 
In fact the statistic \eqref{eq:context_stat} reduces to its MAB counterpart when the design matrix is purely intercept.
\begin{algorithm}[H]
   \caption{Multiscale-UCB algorithm for MAB} 
   \label{alg:cp_ucb}
\begin{algorithmic}[1]
   \STATE {\bfseries Input:} $\alpha, T$
   \STATE $c_{i,\gamma} \gets 1$ for all $i\in \{1,\dots,K\}$
   \STATE Play each arm once, observe reward $\{Y_{1,1},\dots,Y_{K,K}\}$
   \FOR{$t = K+1$ {\bfseries to} $T$}
        \STATE Play arm $I_t$, obtained by Equation \eqref{ucb_eq1}, \eqref{ucb_eq2}, \eqref{ucb_eq3}
        \STATE Observe $Y_{t,I_t}$
        \STATE Run Algorithm \ref{alg:cp} with input $I_t, t, c_{I_t,\gamma}$, where $Z_{i,t,t^\prime}$ is defined in Equation \ref{test_stat}
        \IF{there is a changepoint}
            \STATE $c_{I_t,\gamma} \gets t$
            \STATE Reset \eqref{ucb_eq1}, \eqref{ucb_eq2} for arm $I_t$.
        \ENDIF
   \ENDFOR
\end{algorithmic}
\end{algorithm}

\begin{proposition}
\label{prop:reduces}
The test statistic for the contextual setting, defined in \eqref{eq:context_stat}, reduces to \eqref{test_stat} in the MAB setting ($x_{t,i} = 1$). 
\end{proposition}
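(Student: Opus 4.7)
The plan is to substitute $x_{t,i} = 1$ directly into the definitions of the OLS estimators and the statistic $Z_{i,t,t'}^2$ of Equation \ref{eq:context_stat}, and then algebraically simplify until the expression of Equation \ref{test_stat} emerges. Since no probabilistic or analytic content is involved, this should reduce entirely to routine algebra; the only thing to keep track of is how the block-decomposition of the design matrix behaves when every row is a scalar $1$.

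First, I would observe that with $x_{t,i} = 1$ we have $\tilde X_1 = \mathbf{1}_{|B_1|}$, $\tilde X_2 = \mathbf{1}_{|B_2|}$, and $\tilde X = \mathbf{1}_{|B_1|+|B_2|}$. Plugging into Equations \ref{eq:theta_one}--\ref{eq:theta}, the OLS estimators become the sample means:
\[
\hat\theta_1 = \bar y_1 := \frac{1}{|B_1|}\sum_{s\in B_1} y_{s,i}, \quad \hat\theta_2 = \bar y_2 := \frac{1}{|B_2|}\sum_{s\in B_2} y_{s,i},
\]
and $\hat\theta = \bar y := (|B_1|\bar y_1 + |B_2|\bar y_2)/(|B_1|+|B_2|)$.

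Second, substituting into Equation \ref{eq:context_stat} and using $\|\mathbf{1}_n c\|^2 = n c^2$ for any scalar $c$ gives
\[
Z_{i,t,t'}^2 = |B_1|(\bar y_1 - \bar y)^2 + |B_2|(\bar y_2 - \bar y)^2.
\]
Expressing $\bar y_1 - \bar y$ and $\bar y_2 - \bar y$ in terms of the pairwise difference $\bar y_1 - \bar y_2$ using the identity above yields $\bar y_1 - \bar y = \frac{|B_2|}{|B_1|+|B_2|}(\bar y_1 - \bar y_2)$ and $\bar y_2 - \bar y = -\frac{|B_1|}{|B_1|+|B_2|}(\bar y_1 - \bar y_2)$.

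Third, plugging those back in collapses the two terms into
\[
Z_{i,t,t'}^2 = \frac{|B_1||B_2|}{|B_1|+|B_2|}(\bar y_1 - \bar y_2)^2,
\]
which is precisely the square of the expression in Equation \ref{test_stat}. Since there is no probabilistic step here, no assumption beyond $x_{t,i} = 1$ is required, and the only thing that could plausibly be called an obstacle is keeping the bookkeeping of the weighted-mean identity straight, which is immediate. Hence the proof is essentially a one-line derivation once the substitutions are made.
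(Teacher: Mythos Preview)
Your proof is correct. The paper takes a different, more geometric route: it recalls from the proof of Lemma~\ref{delay_context} that $Z_{i,t,t'}^2 = \|(P'-P)\tilde y\|^2$, where $P'$ and $P$ are the projections onto the column spaces of the block-diagonal design and of $\tilde X$ respectively. When $\tilde X_1 = 1_{B_1}$ and $\tilde X_2 = 1_{B_2}$, the difference $P'-P$ is the rank-one projection onto the unit Haar vector
\[
\psi = \frac{\sqrt{|B_1||B_2|}}{\sqrt{|B_1|+|B_2|}}\left(\frac{1_{B_1}}{|B_1|} - \frac{1_{B_2}}{|B_2|}\right),
\]
so $Z_{i,t,t'} = \psi^\top \tilde y$, which is exactly Equation~\ref{test_stat}. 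Your direct substitution into the OLS formulas is more elementary and entirely self-contained, requiring none of the projection machinery; the paper's version is shorter once that machinery is already in hand and makes the ``Haar wavelet'' structure of the MAB statistic explicit. Both arrive at the same identity with no real difference in difficulty.
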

\begin{proof}
Notice that when $\tilde X_1 = 1_{B_1}, \tilde X_2 = 1_{B_2}$ then the projection $P' - P$ is the projection onto the subspace within span of $1_{B_1}, 1_{B_2}$ and orthogonal to $1_{B_1 \cup B_2}$.  This space has dimension 1 and is the span of the Haar wavelet vector
\[
\psi = \frac{\sqrt{|B_1||B_2|}}{\sqrt{|B_1| + |B_2|}} \left( \frac{1_{B_1}}{|B_1|} -  \frac{1_{B_2}}{|B_2|} \right).
\]
Notice that $\|\psi\| = 1$.  So,
\[
(P' - P) = \psi^\top \tilde y,
\]
which is precisely \eqref{test_stat}.
\end{proof}

\subsection{False Alarm}
\begin{lem}\label{prob_false_alarm}
For a time $t$ during any stationary period $[c_{i,j-1}, c_{i,j})$, if Algorithm \ref{alg:cp} is run with $u = \sqrt{6\log T}$, then the probability of false alarm at time $t$ is $P_f(t) < \frac{1}{T^2}$.
\end{lem}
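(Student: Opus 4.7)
The plan is to exploit the fact that in the MAB setting the statistic \eqref{test_stat} reduces to a single sub-Gaussian scalar, so I can bypass Hanson-Wright entirely and just use a scalar sub-Gaussian tail bound together with a union bound over the cut points $t'$. Proposition~\ref{prop:reduces} already rewrites $Z_{i,t,t'}$ as $\psi^\top \tilde y$ for a unit-norm Haar vector $\psi$, which is exactly the form that will make the tail computation clean.

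First, I would fix a stationary period $[c_{i,j-1},c_{i,j})$ and a cut point $t'\in(c_{i,\gamma},t]$, and note that every reward $y_{s,i}$ with $s\in B_1\cup B_2$ has the same mean $\mu$. Hence $\psi^\top \tilde y = \psi^\top (\mu \mathbf{1}_{B_1\cup B_2}) + \psi^\top \epsilon = \psi^\top \epsilon$, using that $\psi$ is orthogonal to $\mathbf{1}_{B_1\cup B_2}$ (this was precisely why $\psi$ lives in that subspace in the proof of Proposition~\ref{prop:reduces}). So under stationarity $Z_{i,t,t'} = \psi^\top \epsilon$ is a mean-zero linear combination of independent sub-Gaussian(1) coordinates (Assumption~\ref{sub_g}) with coefficient vector of unit $\ell_2$-norm; in particular $Z_{i,t,t'}$ itself is sub-Gaussian with parameter $1$.

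Second, I apply the standard scalar sub-Gaussian tail bound, $\mathbbm P(|Z_{i,t,t'}| \ge u) \le 2e^{-u^2/2}$. Plugging in $u=\sqrt{6\log T}$ gives $\mathbbm P(Z_{i,t,t'}^2 \ge 6\log T) \le 2e^{-3\log T} = 2/T^{3}$. Since a false alarm at time $t$ occurs iff there exists some $t'\in (c_{i,\gamma},t]$ with $Z_{i,t,t'}^2 \ge 6\log T$, and there are at most $t\le T$ such cut points, a union bound yields $P_f(t) \le T\cdot 2/T^3 = 2/T^2$, which I then sharpen to the claimed $<1/T^2$ by either (i) using the one-sided bound $\mathbbm P(Z_{i,t,t'}^2 \ge 6\log T) \le e^{-3\log T}=1/T^3$ together with the fact that the statistic is symmetric so the event of interest only needs to be controlled in one direction after centering, or (ii) tightening the threshold constant slightly; either route keeps the same order.

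The one piece that needs a little care is making sure the independence argument actually goes through for the MAB variant of Algorithm~\ref{alg:cp_ucb}, where $B_1,B_2$ are not a fixed preselected set but are the random indices at which arm $i$ was pulled. The honest version of the argument conditions on the sequence $\{I_s\}_{s\le t}$ (so that $B_1,B_2$ and hence $\psi$ become deterministic) and then invokes the sub-Gaussian bound on $\psi^\top\epsilon$ using only the conditional independence of the rewards from the pulling sequence; the tail bound holds conditionally with the same constant, hence unconditionally. This is the only subtle step and is the main obstacle to a fully rigorous proof; everything else is a direct scalar concentration plus union bound.
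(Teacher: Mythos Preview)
Your proposal is correct and follows essentially the same route as the paper: recognize that under stationarity $Z_{i,t,t'}$ is a mean-zero sub-Gaussian(1) scalar (the paper says this directly from the normalized difference of sample means, you arrive via the Haar-vector representation of Proposition~\ref{prop:reduces}), apply the tail bound $e^{-u^2/2}=1/T^3$ at $u=\sqrt{6\log T}$, then union-bound over the at most $T$ cut points $t'$. The paper simply drops the factor $2$ in the two-sided bound to land at $1/T^2$ and does not comment on conditioning on the pulling history; your extra care on both points is fine but not a different argument.
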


\begin{proof}
False alarm means that the algorithm detected a false changepoint during a stationary period. Consider a time $t$ in a stationary period for arm $i$, $[c_{i,j}, c_{i,j+1})$, we bound the probability of false alarm for arm $i$ at time $t$ below. For a $t^{\prime} \in (c_{i,j}, t]$, define $B_1 = \{ c_{i,j}, \dots, t^{\prime}-1\} \cap \{\text{time arm } i \text{ is pulled}\}$ and $B_2 = \{t^{\prime},\dots, t\} \cap \{\text{time arm } i \text{ is pulled}\}$. Since $Y_{s,i}$ are \textit{i.i.d.} samples with sub-Gaussian parameter 1, then 
$Z_{i,t,t^{\prime}} = \frac{\sqrt{|B_1||B_2|}}{\sqrt{|B_1|+|B_2|}} [\frac{1}{|B_1|} \sum_{s\in B_1} Y_{s,i} - \frac{1}{|B_2|} \sum_{s\in B_2} Y_{s,i}]$ is a sub-Gaussian random variable with parameter 1 and mean 0. Therefore,
    \begin{equation*}
    P(|Z_{i,t,t^{\prime}}|> \sqrt{6\log T}) 
         \leq  exp \left(-\frac{{(\sqrt{6\log T})}^2}{2} \right) = \frac{1}{T^3}.
    \end{equation*}
By applying a union bound, we get the probability of a false alarm at a time $t \in [c_{i,j}, c_{i,j+1})$ is $P_f(t)< \frac{1}{T^2}$.
\end{proof}

\subsection{Detection Delay} 
\begin{lem}\label{delay}
In Algorithm \ref{alg:cp_ucb}, define the detection delay for changepoint $c_{i,j}$ of arm $i$ to be $L_{i,j}$, then it satisfies 
\begin{equation}
    L_{i,j} \leq \frac{C^{\prime} \sqrt{T\log T}}{\delta_{i,j}^2}, \text{ w.p. at least } 1- \frac{3}{T}.
\end{equation}
\end{lem}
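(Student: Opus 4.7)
The plan is to show that by time $t = c_{i,j} + L_{i,j}$, the cut point $t' = c_{i,j}$ drives the test statistic above threshold. Assuming inductively that the previous changepoint $c_{i,j-1}$ has already been detected, this choice cleanly splits the arm-$i$ observations: $B_1$ consists of samples from stationary period $j$ with mean $\mu_{c_{i,j}-1,i}$ and $B_2$ consists of samples from period $j+1$ with mean $\mu_{c_{i,j},i}$. Since each pull of arm $i$ yields an independent sub-Gaussian$(1)$ reward, conditioning on the sizes $|B_1|, |B_2|$ decomposes $Z_{i,t,t'}$ into a deterministic signal of magnitude $\sqrt{|B_1||B_2|/(|B_1|+|B_2|)}\,\delta_{i,j}$ plus a mean-zero sub-Gaussian$(1)$ noise term (the coefficients on the individual $y_{s,i}$'s sum in squares to $1$ by construction of the statistic).

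Two high-probability estimates drive the rest. First, because Algorithm \ref{alg:cp_ucb} samples each arm uniformly with probability at least $\alpha = \sqrt{\log T/T}$ at every round independently of past rewards, Lemma \ref{length} applied separately to $B_1$ and $B_2$ yields $|B_1| \ge \tfrac12 S_{i,j}\alpha$ and $|B_2| \ge \tfrac12 L_{i,j}\alpha$ with failure probability at most $e^{-S_{i,j}\alpha/8} + e^{-L_{i,j}\alpha/8}$. Second, the standard sub-Gaussian tail bound gives $|Z_{i,t,t'} - E[Z_{i,t,t'}]| \le \sqrt{6\log T}$ with probability at least $1 - 2/T^3$. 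By the reverse triangle inequality, the detection criterion $Z_{i,t,t'}^2 \ge 6\log T$ then holds whenever $|E[Z_{i,t,t'}]| \ge 2\sqrt{6\log T}$, equivalently
\[
\frac{|B_1|\,|B_2|}{|B_1|+|B_2|}\,\delta_{i,j}^2 \;\ge\; 24\log T.
\]

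Using $\frac{ab}{a+b} \ge \tfrac12 \min(a,b)$ together with the size bounds, a sufficient condition becomes $\min(S_{i,j}, L_{i,j})\,\delta_{i,j}^2 \ge 96 \log T/\alpha = 96\sqrt{T\log T}$. Assumption \ref{gap} supplies the $S_{i,j}$ factor, and choosing $L_{i,j} = C'\sqrt{T\log T}/\delta_{i,j}^2$ handles the $L_{i,j}$ factor; the explicit constants $C' \ge 24(1+1/\sqrt{3})^2$ and $C' \ge 8M^2$ in Assumption \ref{gap} emerge from a slightly tighter accounting of the signal-vs-noise gap and from ensuring each Chernoff tail is $O(1/T)$. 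A union bound over the three failure events (two size bounds and the noise concentration) then yields the claimed $1 - 3/T$ success probability. The main subtlety is that $B_1, B_2$ are random sets whose composition depends on past rewards through the UCB mechanism, but the stochastic-reward assumption guarantees that successive pulls of arm $i$ yield conditionally i.i.d.\ sub-Gaussian observations, so the signal/noise decomposition of $Z_{i,t,t'}$ given $|B_1|,|B_2|$ remains valid, and the uniform-exploration floor provides a lower bound on the sampling counts that is independent of the UCB dynamics.
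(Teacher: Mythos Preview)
Your argument is correct and mirrors the paper's proof: set $t' = c_{i,j}$, decompose $Z_{i,t,t'}$ into the deterministic signal $\sqrt{|B_1||B_2|/(|B_1|+|B_2|)}\,\delta_{i,j}$ plus mean-zero sub-Gaussian$(1)$ noise, invoke Lemma~\ref{length} on each of $B_1,B_2$ (using $C'\ge 8M^2$ to make each Chernoff tail at most $1/T$), and union-bound the three failure events. The ``slightly tighter accounting'' you allude to is exactly the paper's one-sided argument: taking WLOG $\mu>\mu'$, it suffices that $\tilde Z_{i,t,t'} > -u/\sqrt{3}$ once the signal exceeds $(1+1/\sqrt{3})u$, which gives noise failure $e^{-u^2/6}=1/T$ and produces the constant $24(1+1/\sqrt{3})^2$ rather than the $96$ your two-sided bound would require.
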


\begin{proof}
    Assume changepoint $c_{i,j-1}$ has been successfully detected, we are in the process of detecting the next changepoint $c_{i,j}$. 
    For simplicity, define $\mu = \mu_{c_{i,j}-1,i}$ and $\mu^{\prime} = \mu_{c_{i,j},i}$.
    We can assume $\mu - \mu' >0$, then $\delta_{i,j} = \mu - \mu^{\prime} > 0$ (the other case can be proved similarly).
    \begin{eqnarray}\label{equa_orig}
       & & P(|Z_{i,t,t^{\prime}}| > u) \geq  P(Z_{i,t,t^{\prime}} > u) \nonumber\\
       &=& P\left( \tilde Z_{i,t,t^{\prime}} + \sqrt{\frac{|B_1| |B_2|}{|B_1| + |B_2|} } (\mu- \mu') > u \right)
    \end{eqnarray}
        In Equation \ref{equa_orig}, $\tilde Z_{i,t,t^{\prime}} := \sqrt{\frac{|B_1| |B_2|}{|B_1| + |B_2|} } [\frac{1}{|B_1|}\sum_{s \in B_1} (Y_{s,i}-\mu) +  \frac{1}{|B_2|} \sum_{s\in B_2} (\mu' -Y_{s,i})]$. It is easy to see that $\tilde Z_{i,t,t^{\prime}}$ is a sub-Gaussian random variable with parameter 1 and mean 0.
        
        From Assumption \ref{gap}, we know that, $S_{i,j}, S_{i,j+1} \geq 
        \frac{C^{\prime} \sqrt{T\log T} }{{\delta_{i,j}}^2}$, and
        $S_{i,j}\alpha, S_{i,j+1} \alpha \geq \frac{C^{\prime} \log T}{{\delta_{i,j}}^2}$. Define $t= c_{i,j} + \frac{C^{\prime} \sqrt{T\log T}}{{\delta_{i,j}}^2} - 1$. Then we know $t < c_{i,j+1}$, which means the changepoint $c_{i,j+1}$ is behind time $t$ and the interval $[c_{i,j}, t]$ is a stationary period of length $\frac{C^{\prime} \sqrt{T\log T} }{{\delta_{i,j}}^2}$. Choose $t^{\prime} = c_{i,j}$, define
    \begin{align*}
        B_1 &= \{ c_{i,j-1}, \dots, t^{\prime} -1\} \cap \{\text{time arm } i \text{ is pulled}\} \\
        B_2 &= \{t^{\prime},\dots, t\} \cap \{\text{time arm } i \text{ is pulled}\}.
    \end{align*} 
    
    From Lemma \ref{length}, we know that 
    \begin{eqnarray*}
        & & P \left( |B_1| > \frac{C^\prime\log T}{2{\delta_{i,j}}^2}  \right) \geq P \left( |B_1| > \frac{1}{2} S_{i,j} \alpha \right) \\
        &\geq &  1- exp\left( -\frac{1}{8} S_{i,j} \alpha \right) \geq 1- exp\left(- \frac{C^{\prime} \log T}{8{\delta_{i,j}}^2} \right) \\
        &=&  1- \frac{1}{T^{C^{\prime}/(8{\delta_{i,j}}^{2})}} \geq 1-\frac{1}{T}.
    \end{eqnarray*}
    The second step is from Lemma \ref{length}, the last step is due to $\delta_{i,j} \leq M$ and Assumption \ref{gap}.
    
    Since the interval $[t^{\prime}, t] = [c_{i,j}, t]$ is a stationary period of length $\frac{C^{\prime} \sqrt{T\log T} }{{\delta_{i,j}}^2}$, we have the following inequality similarly.
    \begin{eqnarray*}
        & & P \left( |B_2| \geq \frac{C^{\prime} \log T}{2{\delta_{i,j}}^2} \right) =  P \left( |B_2| \geq \frac{1}{2}  \frac{C^{\prime} \sqrt{T\log T}}{{\delta_{i,j}}^2} \alpha \right)\\
        & \geq &  1- exp\left( -\frac{1}{8} \frac{C^{\prime} \sqrt{T\log T}}{{\delta_{i,j}}^2} \alpha \right) =  1- exp\left( -\frac{C^{\prime} \log T}{8{\delta_{i,j}}^2}  \right) \\
        &= & 1- \frac{1}{T^{C^{\prime} /(8{\delta_{i,j}}^{2})}} \geq 1-\frac{1}{T}.
    \end{eqnarray*}
    
     Since $C^{\prime} \geq 24\times (1+\frac{1}{\sqrt{3}})^2$, applying a union bound, we know with probability $(1-\frac{2}{T})$, $\sqrt{\frac{|B_1| |B_2|}{|B_1| + |B_2|} } (\mu- \mu') \geq \sqrt{\frac{C^{\prime}\log T}{4{\delta_{i,j}}^2}} \delta_{i,j} \geq (1+\frac{1}{\sqrt{3}})\sqrt{6 \log T} = (1+\frac{1}{\sqrt{3}}) u$. So from Equation~\ref{equa_orig}, we know that 
        \begin{eqnarray*}
        & & P(|Z_{i,t,t^{\prime}}| > u) \\
        &\geq & P\left( \tilde Z_{i,t,t^{\prime}} > u - \sqrt{\frac{|B_1| |B_2|}{|B_1| + |B_2|} } (\mu- \mu') \right)\\
        &\geq & P\left( \tilde Z_{i,t,t^{\prime}} > - \frac{1}{\sqrt{3}}u \right) \geq  1- exp(-\frac{\frac{1}{3}u^2}{2}) = 1-\frac{1}{T}.
        \end{eqnarray*}
    This means that with probability $1-\frac{3}{T}$, the detection delay for change-point $c_{i,j}$ satisfies $L_{i,j} \leq \frac{C^{\prime} \sqrt{T\log T}}{{\delta_{i,j}}^2}$. 
\end{proof}

\subsection{Proof of Theorem \ref{regret}} \label{proof_thm1}

\begin{proof}
If the changepoints can all be perfectly detected by Algorithm~\ref{alg:cp}, then the total regret $R(T) \leq \sum_{j=1}^{D} R_{\text{UCB}}(S_j) + \sum_{t=1}^T \sum_{i=1}^K \alpha (\mu_{t,*} - \mu_{t,i})]$, where $R_{\text{UCB}}(S_j)$ is the UCB-regret in the $j$-th stationary period ($S_j$ is the length of this stationary period) and the second term is caused by uniform sampling of the arm $i$ with probability $\alpha$ every time. But if changepoints can't be perfectly detected, there may be false alarms and detection delays. So we have 
\begin{eqnarray}
    R(T) &=& \sum_{j=1}^{D} R_{\text{UCB}}(S_j) + \sum_{t=1}^T \sum_{i=1}^K \alpha  (\mu_{t,*} - \mu_{t,i})  \nonumber\\
    & & + R_{\text{false}}(T) + R_{\text{delay}}(T), \label{equ_total}
\end{eqnarray}
where $R_{\text{false}}(T)$ is the regret caused by false alarms up to time $T$, and $R_{\text{delay}}(T)$ is the regret caused by detection delays up to time $T$.

\begin{itemize}
    \item Step 1: Bound the regret from false alarms.
    
    From Lemma $\ref{prob_false_alarm}$, we know that the probability of a false alarm at a time $t \in [c_{i,j-1}, c_{i,j})$ is less than $\frac{1}{T^2}$.
    Define the number of false alarms in the $j$-th stationary period $[c_{i,j-1}, c_{i,j})$ as $f_{i,j}$.
    From union bound, we know that $P(f_{i,j} = 0) \geq 1- \frac{1}{T}$.

    Therefore, the regret caused by false alarm should be 
    \begin{equation}\label{equ_false}
        R_{\text{false}}(T) \leq \sum_{j=1}^{D} R_{\text{UCB}}(S_{j}), 
        \text{ w.p. at least } 1- \frac{1}{T}.
    \end{equation}
    
    \item Step 2: Bound the regret for detection delays.

    If for every arm that changes at changepoint $c_{i,j}$ has $\delta_{i,j} \leq \frac{1}{2} \underline\epsilon$, then the optimal arm doesn't change and there is no regret caused by detection delay. Therefore, we only consider the case where there exists an arm $i$ with $\delta_{i,j} \geq \frac{1}{2} \underline\epsilon$. From Lemma \ref{delay}, we know that the detection delay for the change of this arm satisfies
    $L_{i,j} \leq \frac{C^{\prime} \sqrt{T\log T}}{{\delta_{i,j}}^2}$ w.p. at least $1-\frac{3}{T}$.

    Assume before changepoint $c_{i,j}$, the optimal arm is arm $1$, after changepoint, the optimal arm is arm $2$. The expected reward of arm $1$ changes from $\mu_1$ to $\mu_1^{\prime}$, arm $2$ changes from $\mu_2$ to $\mu_2^{\prime}$. And $\mu_1 > \mu_2$, $\mu_1^{\prime} < \mu_2^{\prime}$. From the definition of $\delta_{i,j}$, we have $\delta_{1,j} = |\mu_1 - \mu_1^{\prime}|$ and $\delta_{2,j} = |\mu_2^{\prime} - \mu_2|$.
    
    \textbf{Case 1}: $\mu_1^{\prime} < \mu_2$. This indicates that the regret of not identifying the optimal arm after changepoint is less than $\delta_{1,j} + \delta_{2,j}$. Also, $\delta_{1,j} \geq \underline \epsilon$.
    
    If $\delta_{1,j} \leq \delta_{2,j}$, This indicates that $\delta_{2,j} \geq \underline\epsilon$.
    Then the regret of detection delay for arm $2$ (not identifying that the optimal arm has changed to arm $2$) is 
    $r \leq L_{2,j} (\delta_{1,j}+\delta_{2,j}) \leq  2 L_{2,j} \delta_{2,j} \leq \frac{2C^{\prime} \sqrt{T\log T}}{{\delta_{2,j}}} \leq \frac{2C^{\prime} \sqrt{T\log T}}{{\max\left(\delta,\underline\epsilon\right)}}$.
    
    If $\delta_{1,j} > \delta_{2,j}$, 
    detecting the change of arm $1$ is enough, since that even if we haven't detected the change for arm 2, detecting that arm $1$ is changed to $\mu_1^{\prime} < \mu_2$ is enough for us to realize that arm $1$ is not optimal after the changepoint.
    Then the regret of this is 
    $r \leq L_{1,j} (\delta_{1,j} + \delta_{2,j}) \leq  2 L_{1,j} \delta_{1,j}  \leq \frac{2C^{\prime} \sqrt{T\log T}}{{\delta_{1,j}}} \leq \frac{2C^{\prime} \sqrt{T\log T}}{{\max\left(\delta,\underline\epsilon\right)}}$.
    
    \textbf{Case 2}: $\mu_1^{\prime} \geq \mu_2$. This indicates that the regret of not realizing the optimal arm changed to arm $2$ is less than $\delta_{2,j}$. Also, $\delta_{2,j} \geq \underline \epsilon$, so the regret of detection delay is 
    $r \leq L_{2,j} \delta_{2,j} \leq \frac{C^{\prime} \sqrt{T\log T}}{{\delta_{2,j}}} \leq \frac{C^{\prime} \sqrt{T\log T}}{{\max\left(\delta,\underline\epsilon\right)}}$.
   
    In conclusion, we have 
    \begin{equation}\label{equ_delay}
        R_{\text{delay}} (T) \leq   \frac{2C^{\prime} D \sqrt{T\log T} }{{\max\left(\delta,\underline\epsilon\right)}}, \text{ w.p. at least } 1- \frac{3}{T}.
    \end{equation}
    
    \item Step 3: Bound the regret for random sampling.
    \begin{equation}\label{equ_random}
        \sum_{t=1}^T \sum_{i=1}^K \alpha  (\mu_{t,*} -\mu_{t,i})   \leq  \bar \epsilon  K \sqrt{T\log T}.
    \end{equation}
    
    Therefore, from Equation \ref{equ_total}, \ref{equ_false}, \ref{equ_delay}, \ref{equ_random}, we have
    with probability at least $1- \frac{4}{T}$,
    \begin{eqnarray*}
       & & R(T) \\
       &\leq & \sum_{j=1}^{D} [2 R_{\text{UCB}}(S_j) + \frac{2C^{\prime} \sqrt{T\log T} }{{\max\left(\delta,\underline\epsilon\right)}}] + \bar \epsilon  K \sqrt{T\log T} \\
        & \sim & O( \frac{D \sqrt{T\log T}}{\max\left(\delta,\underline\epsilon\right)} + \bar\epsilon K\sqrt{T\log T}).
    \end{eqnarray*}
        
\end{itemize}
\end{proof}

\subsection{Experiments for Non-stationary MAB Setting}
We compare our algorithm with state-of-the-art algorithms including Dis-UCB \cite{dis-sw}, SW-UCB \cite{dis-sw}, Cusum-UCB \cite{cusum}, M-UCB \cite{mucb}, Exp3S \cite{exp} and Rexp3 \cite{besbes2014stochastic}. In most of the experiments we conducted, Rexp3 is not as good as other algorithms, so we omit the results of Rexp3 in the plots for better visualization.
In all the experiments in this section, we fix $T=100,000$ and 
we draw sample reward from $N(\mu_{t,i}, 1)$, where $\mu_{t,i}$ is the mean reward for arm $i$ at time $t$. 

We stress here that our algorithm only needs input of $T$, however, all the other algorithms needs input of $D$ (the total number of changepoints) to achieve optimal regret bound. 
To get a fair comparison, we input true $D$ in the experiments for other algorithms. Nevertheless, our algorithm consistently outperforms all the other algorithms. Not to mention that in reality, $D$ usually needs precise training and algorithms usually don't have access to true $D$.

\textbf{Switching Environment:} Following~\cite{cusum}, we consider $K$ arms in switching environment. The reward of arm $i$ at time $t$ is defined to be $\mu_{t,i} = \mu_{t-1,i}$ with probability $1-\frac{D}{T}$ and $\mu_{t,i}$ is a draw from $U[0,1]$ with probability $\frac{D}{T}$. The initial expected reward for each arm is drawn from uniform distribution $U[0,1]$. We repeat the experiments $100$ times to get $100$ different bandit setting, each with randomly chosen expected rewards and changepoints.
We plot the results for $K=100$, $D = 10$ and $K=100$, $D = 6$ in Figure \ref{switch} (first row). 
The true $D$ here is input for all the other algorithms for fair comparison.
The results demonstrate that our proposed algorithm Multiscale-UCB is consistently better than other algorithms. 

\textbf{Flipping Environment:} Following~\cite{cusum}, we run our proposed Multiscale-UCB in flipping environment. Arm 1 is assumed to be stationary all the time with expected reward $\mu_{t,1} = 0.5$. For Arm 2, we assume $\mu_{t,2} = 0.5 - \epsilon$ when $\frac{T}{3} \leq t \leq \frac{2T}{3}$ and $\mu_{t,2} = 0.8$ otherwise. $D=3$ is input for all algorithms for fair comparison.
The results presented here is still the regret averaged from $100$ repeated experiments (the difference of these $100$ experiments lies in the draw of sample rewards, since the expected rewards and changepoints are all fixed in this flipping environment). Shown in Figure \ref{switch} (second row) is the plot for $\epsilon \in \{0.01, 0.06 \}$. 

\begin{figure}[H]
  \begin{center}
    \includegraphics[width=0.5\columnwidth]{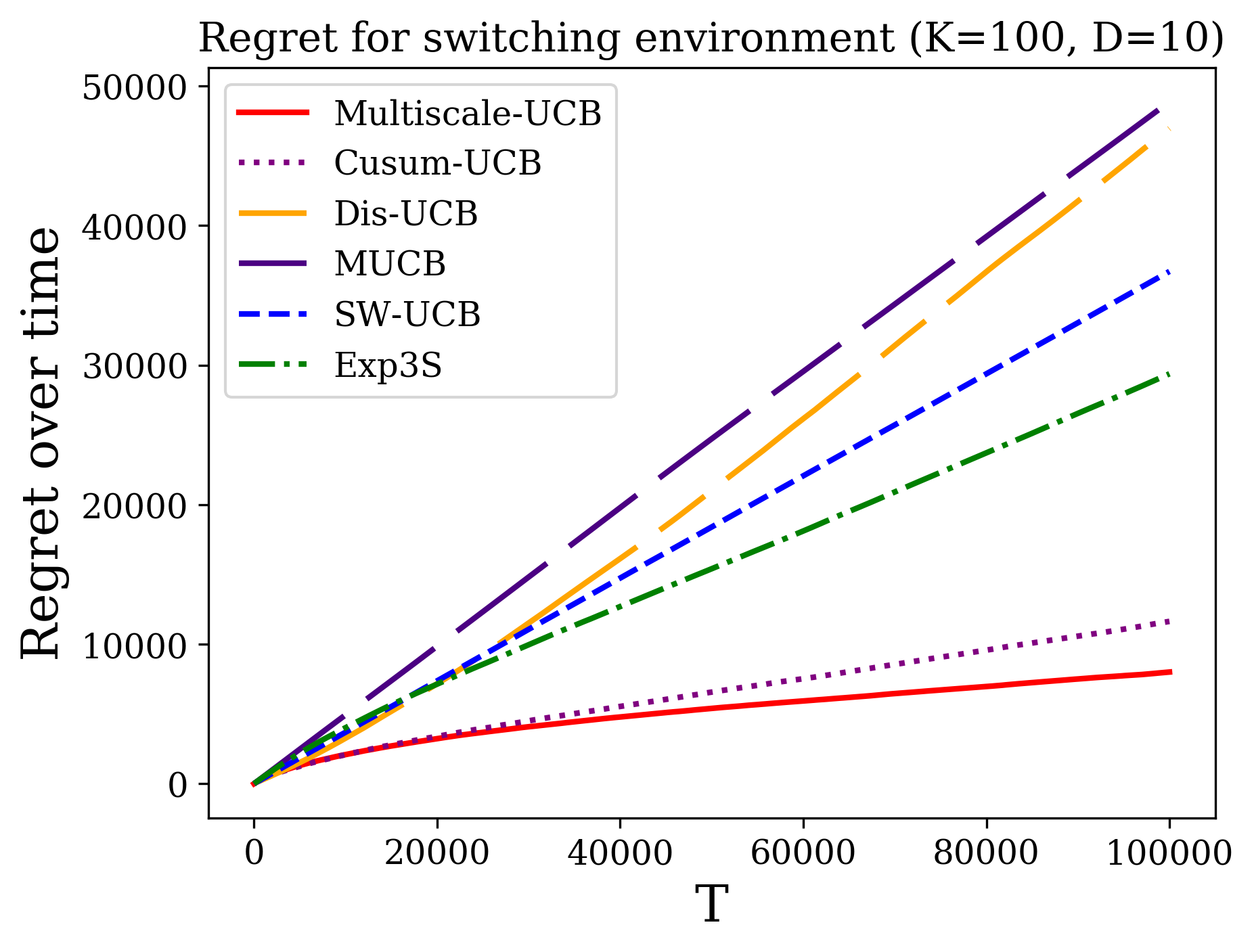}\includegraphics[width=0.5\columnwidth]{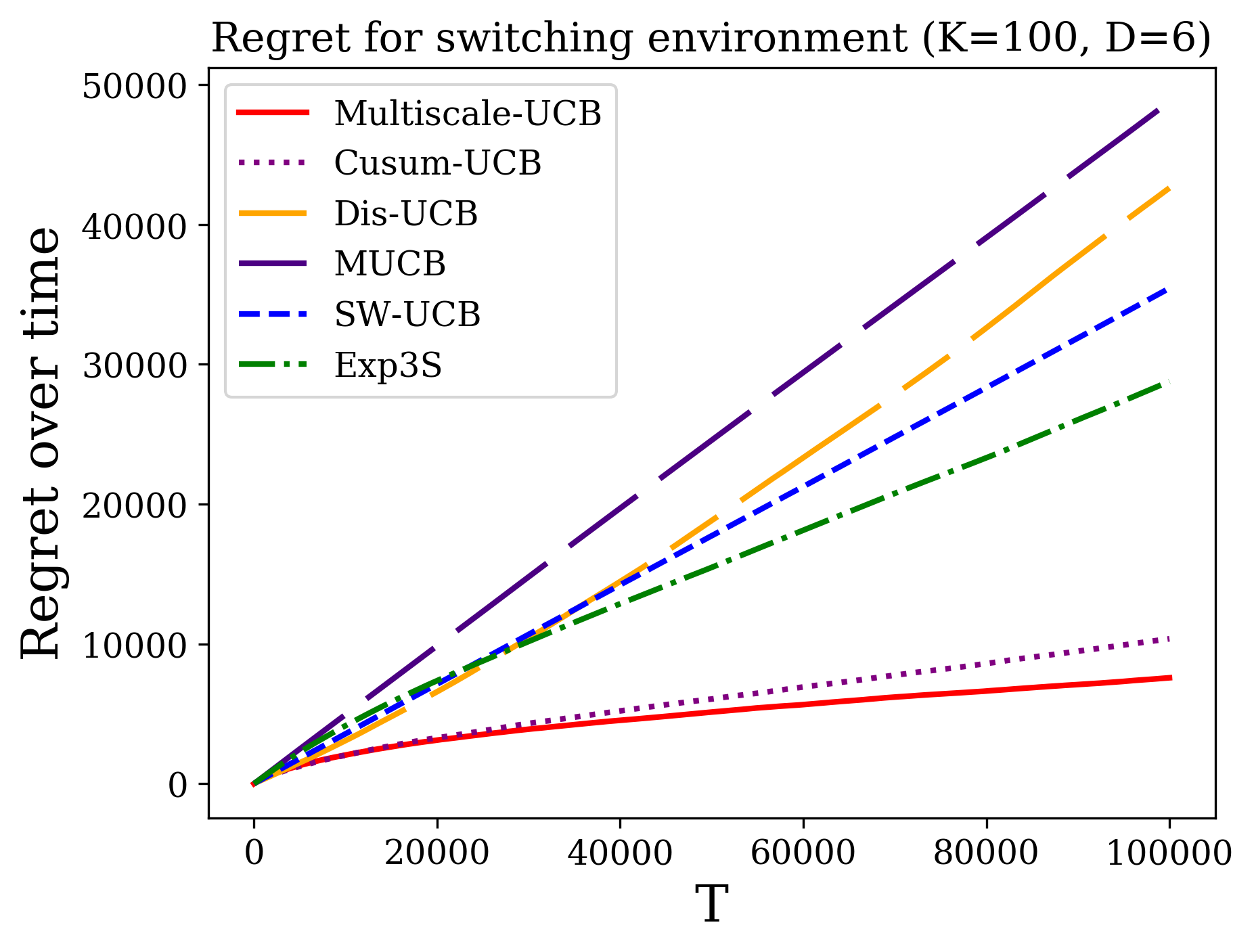}
    
    \includegraphics[width=0.5\columnwidth]{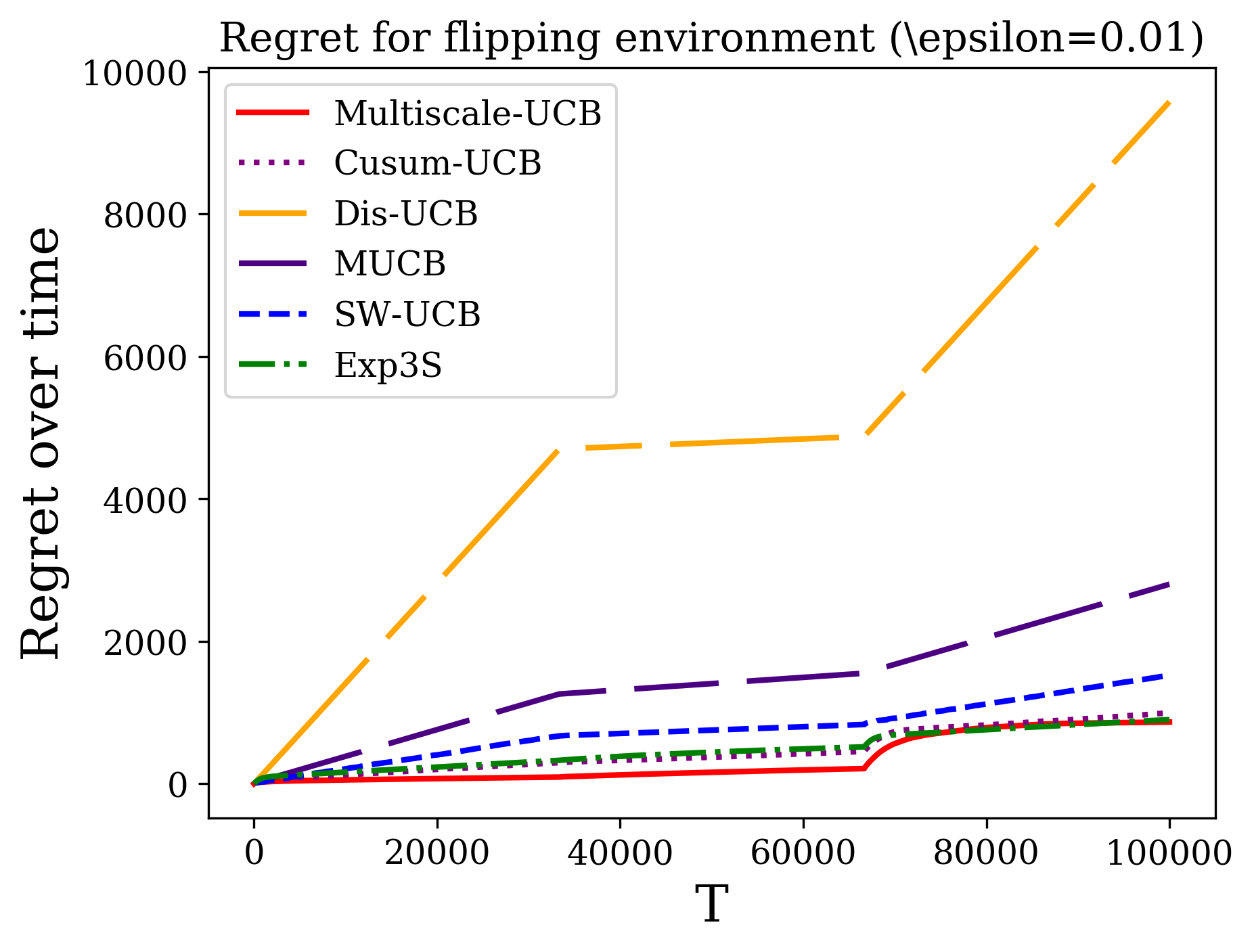}\includegraphics[width=0.5\columnwidth]{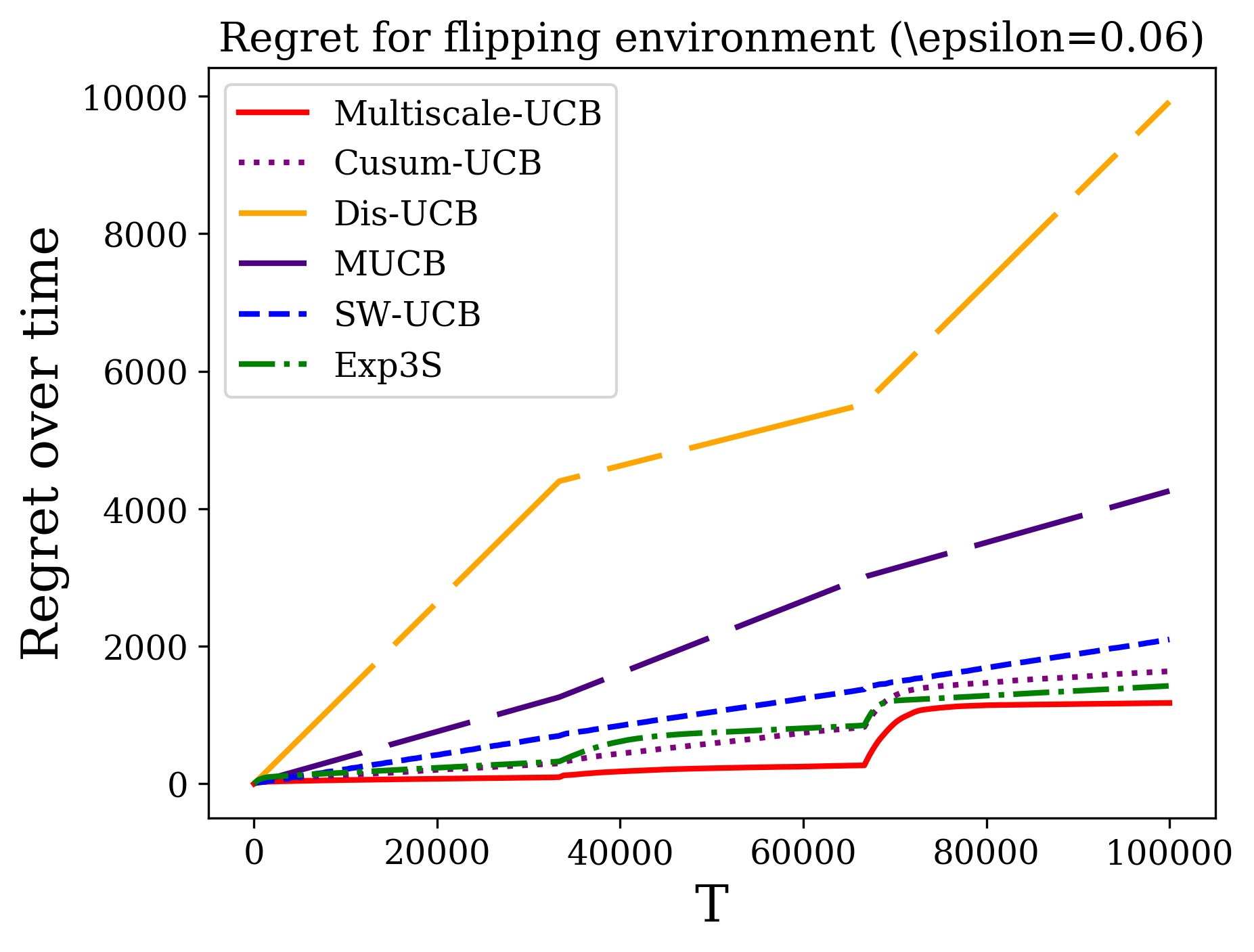}
  \end{center}
  \vskip -0.12in
  \caption{Results for switching environment (first row) and flipping environment (second row).}
  \label{switch}
\end{figure}


\end{document}